\newcommand{\1}[1]{\mathbf{I}_{#1}}
\def\eqref#1{equation~\ref{#1}}
\def\ceil#1{\lceil #1 \rceil}
\def\1{\bm{1}}
\DeclareMathAlphabet{\mathsfit}{\encodingdefault}{\sfdefault}{m}{sl}
\SetMathAlphabet{\mathsfit}{bold}{\encodingdefault}{\sfdefault}{bx}{n}
\tikzset{
    arrowhead/.pic = {
    \draw[thick, rotate = 45] (0,0) -- (#1,0);
    \draw[thick, rotate = 45] (0,0) -- (0, #1);
    }
}
\definecolor{mygreen}{RGB}{34,139,34}
\definecolor{mypink}{RGB}{255,100,203}
\theoremstyle{plain}
\newtheorem{theorem}{Theorem}[section]
\newtheorem{lemma}[theorem]{Lemma}
\theoremstyle{definition}
\theoremstyle{remark}
\newtheorem{remark}[theorem]{Remark}
\newcommand{\AlgComment}[1]{\hfill{\footnotesize\textcolor{blue}{$\triangleright$~#1}}}
\definecolor{mygreen}{RGB}{34,139,34}
\definecolor{mypink}{RGB}{255,100,203}
\definecolor{darkgreen}{rgb}{0,0.5,0}
\icmltitlerunning{Computationally-efficient Graph Modeling with Refined Graph Random Features}
\begin{document}

\twocolumn[
  \icmltitle{Computationally-efficient Graph Modeling with Refined Graph Random Features}

  % It is OKAY to include author information, even for blind submissions: the
  % style file will automatically remove it for you unless you've provided
  % the [accepted] option to the icml2026 package.

  % List of affiliations: The first argument should be a (short) identifier you
  % will use later to specify author affiliations Academic affiliations
  % should list Department, University, City, Region, Country Industry
  % affiliations should list Company, City, Region, Country

  % You can specify symbols, otherwise they are numbered in order. Ideally, you
  % should not use this facility. Affiliations will be numbered in order of
  % appearance and this is the preferred way.
  \icmlsetsymbol{equal}{*}

 \begin{icmlauthorlist}
    \icmlauthor{Krzysztof Choromanski}{equal,a,b}
    \icmlauthor{Avinava Dubey}{equal,c}
    \icmlauthor{Arijit Sehanobish}{equal,d}
    \icmlauthor{Isaac Reid}{a,e}
\end{icmlauthorlist}

\icmlaffiliation{a}{Google DeepMind, New York, NY, USA}
\icmlaffiliation{b}{Columbia University, New York, NY, USA}
\icmlaffiliation{c}{Google Research, Mountain View, CA, USA}
\icmlaffiliation{d}{Independent Researcher}
\icmlaffiliation{e}{University of Cambridge, Cambridge, UK}
  %   \icmlaffiliation{yyy}{Department of XXX, University of YYY, Location, Country}
  % \icmlaffiliation{comp}{Company Name, Location, Country}
  % \icmlaffiliation{sch}{School of ZZZ, Institute of WWW, Location, Country}

  \icmlcorrespondingauthor{Krzysztof Choromanski}{kchoro@google.com}
  % \icmlcorrespondingauthor{Firstname2 Lastname2}{first2.last2@www.uk}

  % You may provide any keywords that you find helpful for describing your
  % paper; these are used to populate the "keywords" metadata in the PDF but
  % will not be shown in the document
  \icmlkeywords{Machine Learning, ICML}

  \vskip 0.3in
]

% this must go after the closing bracket ] following \twocolumn[ ...

% This command actually creates the footnote in the first column listing the
% affiliations and the copyright notice. The command takes one argument, which
% is text to display at the start of the footnote. The \icmlEqualContribution
% command is standard text for equal contribution. Remove it (just {}) if you
% do not need this facility.

% Use ONE of the following lines. DO NOT remove the command.
% If you have no special notice, KEEP empty braces:
% \printAffiliationsAndNotice{}  % no special notice (required even if empty)
% Or, if applicable, use the standard equal contribution text:
\printAffiliationsAndNotice{\icmlEqualContribution}

\begin{abstract}
We propose \textit{refined GRFs} (GRFs++), a new class of \textit{Graph Random Features} (GRFs) for efficient and accurate computations involving kernels defined on the nodes of a graph.
GRFs++ resolve some of the long-standing limitations of regular GRFs, including difficulty modeling relationships between more distant nodes. They reduce dependence on sampling long graph random walks via a novel \textit{walk-stitching} technique, concatenating several shorter walks without breaking unbiasedness. %, where several shorter walks are concatenated to emulate the mechanism of conducting longer random walks. 
By applying these techniques, GRFs++ inherit the approximation quality provided by longer walks but with greater efficiency, trading sequential inefficient sampling of a long walk for parallel computation of short walks and matrix-matrix multiplication.
Furthermore, GRFs++ extend the simplistic GRFs walk termination mechanism (Bernoulli schemes with fixed halting probabilities) to a broader class of strategies, applying general distributions on the walks' lengths. 
This improves approximation accuracy of graph kernels, without incurring extra computational cost. 
We provide empirical evaluations to showcase our claims and complement our results with theoretical analysis.
\end{abstract}

% \vspace{-4mm}
\section{Introduction \& Related Work}
\label{sec:intro_related}
% \vspace{-2.5mm}
Graph modeling plays an important role in several applications of machine learning (ML), such as anomaly, community and fraud detection \citep{anomaly-1, anomaly-2, anomaly-3, anomaly-5, anomaly-6, anomaly-7, anomaly-8, anomaly-9}, recommender systems \citep{recommender-1, recommender-2, recommender-3, recommender-4}, and computational biology \citep{biology-1, biology-2, biology-3}. 
As for Euclidean data, research on graph modeling has spurred the development of many hard-coded/learnable or parameterized \citep{gk-2} classes of \textit{kernels} (similarity functions), defining relationships between nodes in the graph \citep{gk-3, gk-5} or between the graphs themselves \citep{gk-1, gk-4}. 

In this paper, we focus on graph node kernels $\mathrm{K}: \mathrm{V}(\mathrm{G}) \times \mathrm{V}(\mathrm{G}) \rightarrow \mathrm{R}$ defined on the vertices $\mathrm{V}$ of a given graph $\mathrm{G}$, where the similarity between the nodes is measured via their relationship in the graph -- e.g how well-connected the nodes are. 
Common examples of such kernels include the \textit{d-regularized Laplacian}, \textit{diffusion process}, \textit{p-step random walk}, and \textit{inverse cosine} kernels \citep{gk-3, grfs}. 
Computing the corresponding Gram matrices $\mathbf{K}(\mathrm{G})=[\mathrm{K}(v_{i},v_{j})]_{i,j=1}^{N}$ for $v_{1},...,v_{N} \in \mathrm{V}(\mathrm{G})$ tends to be expensive, since this often requires operations of time complexity cubic in the number of graph nodes $N$. % time for the calculation of the corresponding kernel matrix . 
For this reason, research has been dedicated to developing efficient approximation strategies.
One common approach is to rewrite the graph kernel as a product of two lower rank matrices, linearizing the graph kernel values with some mapping $\phi:\mathrm{V} \rightarrow \mathbb{R}^{m}$ as follows: 
\begin{equation}
\widehat{\mathrm{K}}(v_{i},v_{j}) = \phi(v_{i})^{\top}\phi(v_{j}).
\end{equation}
This low-rank factorization unlocks efficient computations with the corresponding kernel matrices. 
In particular, matrix-multiplication operations no longer require explicit materialization of $\textbf{K}$, exploiting the associativity of matrix multiplication. 
However, until recently this approach was restricted to ad-hoc learnable graph kernels defined implicitly via learnable $\phi$ \citep{pre-grfs}, rather than approximations of the specific classes listed above. 

A series of recent papers proposed a new mechanism called \textit{Graph Random Features} (GRFs) \citep{grfs, qgrfs, general_grfs, repelling_walks, grfs-sing}. 
GRFs provide unbiased approximation of the classes of graph kernels listed above, with probabilistic mappings $\phi$ obtained via graph random walks. % (RWs) in the corresponding graphs and originating in vertices $v$ for which $\phi(v)$ needs to be computed. 
For every graph node $v$, GRFs build a scalar field on the subset of RW-reachable graph nodes $\mathrm{V}(\mathrm{G})$ via incremental (kernel-dependent) updates of the field in the visited nodes. 
This field is then mapped to a node-embedding $\phi(v)$, encoding the relationship of the node to the entire graph $\mathrm{G}$. 
Since $\phi(v)$ is probabilistic, it is referred to as the \textit{graph random feature} corresponding to $v$. 
Though originally introduced to approximate kernels defined between pairs of nodes, GRFs were recently \textit{lifted} for unbiased approximation of kernels defined between pairs of graphs \citep{lifted-grfs}.

In this paper, we consider a refined class of GRFs, referred to as GRFs++, that addresses several limitations of standard GRFs. The proposed method reduces the reliance on long walks by composing shorter walks, enabling more efficient computation while preserving the core properties of existing GRF methods. We also consider more flexible walk-termination schemes that preserve the unbiasedness of GRFs. The remainder of the paper details these ideas, along with their theoretical and empirical implications.
% The remainder of the paper details these ideas, along with their theoretical and empirical implications.

% In this paper, we propose a new class of GRF for efficient and accurate computations involving graph kernels defined on the nodes of the graph, that we refer to as \textit{refined GRFs}, or GRFs++. GRFs++ resolve some of the long-standing challenges that regular GRFs face, such as: difficulty in modeling relationships between more distant nodes. They also reduce the dependence on the longer random walks in the graphs, the workhorse mechanism of the regular GRFs. This is done via the newly-proposed \textit{walk-stitching} technique, where several shorter walks are concatenated to emulate the mechanism of conducting longer random walks. By applying this techniques, GRFs++ inherit the approximation quality provided by longer walks, yet via a much more efficient method, effectively trading sequential and less computationally-efficient mechanism of conducting a long walk for a parallel computation of short walks and matrix-matrix multiplications. Furthermore, GRFs++ extend simplistic GRFs' walk-termination mechanism leveraging standard Bernoulli schemes with fixed halting probabilities into a class of strategies applying general distributions on the walks' lengths and maintaining unbiasedness of regular GRFs. This leads to more accurate approximation of the graph kernels under consideration, and with no extra computational cost. We provide empirical evaluations, showcasing all the claims, and complement our results with the theoretical analysis.

This paper is organized as follows:

\begin{enumerate}
    \item In Sec.~\ref{sec:refined_grfs}, we present the refined GRFs++ mechanism, introducing the walk-stitching technique (Sec.~\ref{sec:walk-stitching}), a general termination strategy (Sec.~\ref{sec:beyond_bernoulli}), and their connection to higher-order de-convolutions. 
    % \item In Sec. \ref{sec:refined_grfs}, we present the mechanism of refined GRFs. We start with regular GRFs in Sec. \ref{sec:regular_grfs} and then introduce the GRFs++ mechanism in Sec. \ref{sec:refined_grfs_core}. The walk-stitching mechanism is introduced in Sec. \ref{sec:walk-stitching} and general termination strategy in Sec. \ref{sec:beyond_bernoulli}. 
    In Sec.~\ref{sec:refined_grfs} (continued in Sec.~\ref{sec:theory}), we also provide an intrinsic connection between finding a particular instantiation of the GRFs++ algorithm for a given graph kernel and higher-order \textit{(de-) convolutions} of the discrete series encoding its kernel matrix as a Taylor series involving powers of the graph's weight matrices. 
    \item In Sec.~\ref{sec:theory}, we provide theoretical analysis of GRFs++, including its unbiasedness and concentration results. We show that stitching more walks improves approximation.
    % \item In Sec. \ref{sec:theory}, theoretical analysis of GRFs++ is provided. In particular, we show that base GRFs++ mechanism provides an unbiased estimation of the considered graph kernels and present some concentration results.
    \item In Sec. \ref{sec:experiments}, we provide a thorough experimental evidence comparing GRFs++ to regular GRFs on approximation quality, speed, and several downstream tasks: normal vector field prediction on meshes, clustering, graph and image classification.
    % \item In Sec. \ref{sec:experiments}, we provide thorough experimental evidence, supporting all our claims. This involves approximation quality and speed comparison of the regular GRFs and GRFs++, as well as several downstream experiments: normal vector field prediction on the mesh, clustering and graph classification.
    \item We conclude in Sec.~\ref{sec:conclusion} and provide the theoretical proofs in the Appendix (Sec.~\ref{sec:appendix}).
\end{enumerate}

% \vspace{-3.2mm}
\section{Refined GRFs (GRFs++)}
In this section, we start with an overview of GRFs and then systematically derive GRF++, combining several refinements into a unified framework. 
\label{sec:refined_grfs}
% \vspace{-3.4mm}
\subsection{Preliminaries: regular GRFs}
\label{sec:regular_grfs}
% \vspace{-4.0mm}
We start by providing an overview of the regular GRF mechanism. 
We take a weighted undirected graph $\mathrm{G}(\mathrm{V},\mathrm{E}, \mathbf{W}=[w(i,j)]_{i,j \in \mathrm{V}})$ with $N$ nodes/vertices, where (1) $\mathrm{V}$ is a set of vertices, (2) $\mathrm{E} \subseteq \mathrm{V} \times \mathrm{V}$ is a set of undirected edges ($(i, j) \in \mathrm{E}$ indicates that there is an edge between $i$ and $j$ in $\mathrm{G}$), and (3) $\mathbf{W} \in \mathbb{R}_{\geq 0}^{N \times N}$ is a weighted adjacency matrix (if no edge exists then the corresponding weight is zero). 

We consider the following kernel matrix $\mathbf{K}_{\boldsymbol{\alpha}}(\mathbf{W})\in \mathbb{R}^{N \times N}$, where $\boldsymbol{\alpha}=(\alpha_{k})_{k=0}^{\infty}$ and $\alpha_{k} \in \mathbb{R}$:
\begin{equation}
\label{eq:main}
\mathbf{K}_{\boldsymbol{\alpha}}(\mathbf{W}) = \sum_{k=0}^{\infty} \alpha_{k} \mathbf{W}^{k}.    
\end{equation}
For bounded $(\alpha_{k})_{k=0}^{\infty}$ and $\|\mathbf{W}\|_{\infty}$ small enough, the above sum converges.
The matrix $\mathbf{K}_{\boldsymbol{\alpha}}(\mathbf{W})$ defines a kernel on the nodes of the underlying graph. 
Interestingly, Eq.~\ref{eq:main} covers all the special cases of graph node kernels we explicitly listed in Sec.~\ref{sec:intro_related}. 
It also covers functions that are not positive definite, since $(\alpha_{k})_{k=0}^{\infty}$ can be chosen arbitrarily. 
From now on, we will associate graph kernels with sequences $(\alpha_{k})_{k=0}^{\infty}$.  

GRFs enable one to rewrite $\mathbf{K}_{\boldsymbol{\alpha}}(\mathbf{W})$ (in expectation) as
$\mathbf{K}_{\boldsymbol{\alpha}}(\mathbf{W}) \overset{\mathbb{E}}{=} \mathbf{K}_{1}\mathbf{K}_{2}^{\top}$,
for independently sampled $\mathbf{K}_{1},\mathbf{K}_{2} \in \mathbb{R}^{N \times d}$ and some $d \leq N$.
%In practical applications, $\mathbf{K}_{1},\mathbf{K}_{2}$ are often sparse. 
This factorization enables efficient (sub-quadratic) and unbiased approximation of the matrix-vector products $\mathbf{K}_{\boldsymbol{\alpha}}(\mathbf{W})\mathbf{x}$ as $\mathbf{K}_{1}(\mathbf{K}_{2}^{\top}\mathbf{x})$, if $\mathbf{K}_{1},\mathbf{K}_{2}$ are sparse or $d=o(N)$.
This is often the case in practice.
However, if this does not hold, explicitly materializing $\mathbf{K}_1 \mathbf{K}_2^\top$ enables one to approximate $\mathbf{K}_{\boldsymbol{\alpha}}(\mathbf{W})$ in quadratic (c.f. cubic) time.
Below, we describe the base GRF method for constructing sparse $\mathbf{K}_{1}, \mathbf{K}_{2}$ for $d=N$. 
Extensions giving $d=o(N)$, using the Johnson-Lindenstrauss Transform \citep{jlt}, can be found in \citep{grfs}.
Each $\mathbf{K}_{j}$ for $j \in \{1,2\}$ is obtained by row-wise stacking of the vectors $\phi_{f}(i) \in \mathbb{R}^{N}$ for $i \in \mathrm{V}$, where $f$ is the \textit{modulation function} $f:\mathbb{R} \rightarrow \mathbb{R}$, specific to the graph kernel being approximated. 
The procedure to construct random vectors $\phi_{f}(i)$ is given in Algorithm 1. 
Intuitively, one samples an ensemble of RWs from each node $i \in V$. 
Every time a RW visits a node, the scalar value in that node  is updated with the fraction of the load carried by the walker, depending on the modulation function. 

\begin{algorithm}[t]
\caption{\textbf{\textcolor{violet}{Regular GRFs:}} Construct vectors $\xi_{\rho}(i) \in \mathbb{R}^N$ to approximate $\mathbf{K}_{\boldsymbol{\alpha}}(\mathbf{W})$}\label{alg:constructing_rfs_for_k_alpha}
\textbf{Input:} weighted adjacency matrix $\mathbf{W} \in \mathbb{R}^{N \times N}$, vector of unweighted node degrees (number of out-neighbours) $\mathrm{deg} \in \mathbb{R}^N$, modulation function $\rho:(\mathbb{N} \cup \{0\})  \to \mathbb{R}$, termination probability $p_\textrm{halt} \in (0,1)$, node $i \in \mathcal{N}$, number of random walks to sample $m \in \mathbb{N}$.

\textbf{Output:} signature vector $\xi_\rho(i) \in \mathbb{R}^N$. \\
% \vspace{-4mm}
\begin{algorithmic}[1]
\STATE initialize: $\xi_\rho(i) \leftarrow \boldsymbol{0}$
\FOR{$w = 1, ..., m$}
\STATE initialize: \lstinline{load} $\leftarrow 1$, \lstinline{current_node} $\leftarrow i$, \lstinline{terminated} $\leftarrow \textrm{False}$, \lstinline{walk_length} $\leftarrow 0$ 
\WHILE{ \lstinline{terminated} $= \textrm{False}$ }
\STATE $\xi_\rho(i)$[\lstinline{current_node}] $\leftarrow$ $\xi_\rho(i)[\texttt{current\_node}]$ \\ $ + \texttt{load}\times \rho\left(\right.$\lstinline{walk_length}$\left.\right)$
\STATE \lstinline{walk_length} $\leftarrow$ $\texttt{walk\_length} + 1$
\STATE \lstinline{new_node} $\leftarrow \textrm{Unif} \left[ \mathcal{N}(\right.$\lstinline{current_node}$\left.)\right]$ \AlgComment{\textcolor{blue}{assign to one of neighbors}}
\STATE \lstinline{load} $\leftarrow$ $\texttt{load}\times \frac{\textrm{deg}\scriptsize{[\lstinline{current_node}]}}{1-p_\textrm{halt}} \times\mathbf{W}\left[\right.$\lstinline{current_node,new_node}$\left.\right]$ \AlgComment{\textcolor{blue}{update load}}
\STATE $\lstinline{current_node} \leftarrow \lstinline{new_node}$
\STATE \lstinline{terminated} $\leftarrow \left( t \sim \textrm{Unif}(0,1) < 
p_\textrm{halt}\right)$ \AlgComment{\textcolor{blue}{draw RV $t$ to decide on termination}}
\ENDWHILE
\ENDFOR
\STATE normalize: $\xi_\rho(i) \leftarrow \xi_\rho(i)/m$
\end{algorithmic}
\end{algorithm}
After all the walks terminate, the vector $\phi_{f}(i)$ is obtained by concatenation of all the scalars/loads from the discrete scalar field, followed by a simple renormalization.
It remains to describe how the kernel-dependent modulation function $f$ is constructed. 
For unbiased estimation, $f:\mathbb{N} \rightarrow \mathbb{C}$ needs to satisfy $\sum_{p=0}^{k}f(k-p)f(p) = \alpha_{k}$, 
% \begin{equation}
% \sum_{p=0}^{k}f(k-p)f(p) = \alpha_{k},    
% \end{equation}
for $k=0,1,...$ (see Theorem 2.1 in \citep{general_grfs}).

\subsection{From GRFs to GRFs++}
\label{sec:refined_grfs_core}
We now introduce GRF++ through a walk stitching mechanism~(Sec.~\ref{sec:walk-stitching}), extend the underlying Bernoulli trial scheme~(Sec.~\ref{sec:beyond_bernoulli}), and finally combine these components into a unified construction~\ref{sec:combine}.
\subsubsection{Walk-stitching mechanism}
\label{sec:walk-stitching}

The inherently sequential procedure of constructing random walks is not supported by modern accelerators.
This is one of the key weaknesses of regular GRFs. 
Shortening the walks by increasing $p_{\mathrm{halt}}$ can in principle mitigate this, at the cost of giving up modeling relationships between more distant nodes in the graph; a graph kernel value between two nodes $i$ and $j$ whose corresponding walks do not intersect is approximated by zero.

\begin{figure}[t]
    \centering
\includegraphics[width=0.95\linewidth]{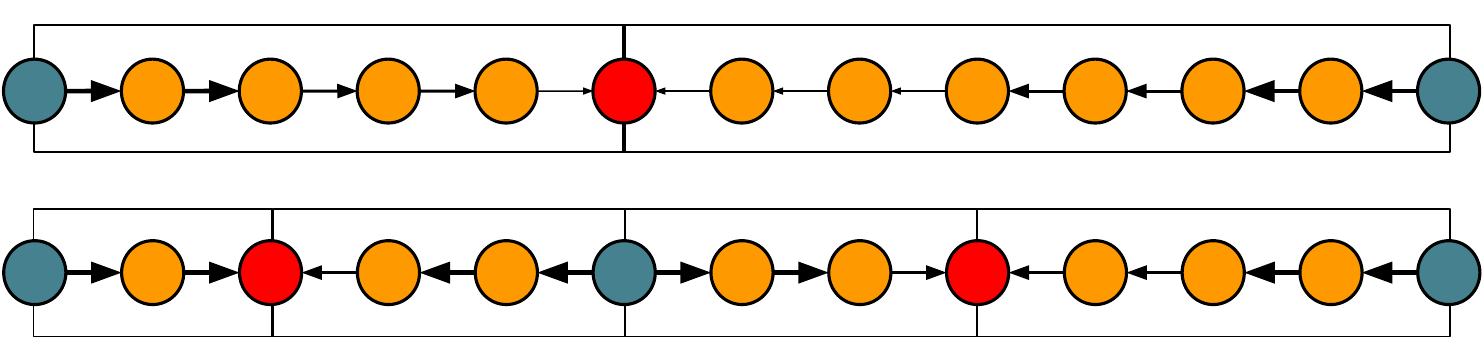} 
    % \vspace{-2mm}
    \caption{\small{Pictorial description of the \textit{walk-stitching technique}. Each rectangular block corresponds to a random walk and red nodes depict vertices where walks meet. The blue nodes are the communicating ones. The thickness of the arrow, depicting a transition from step $t$ to step $t+1$, indicates the probability that such a transition will occur (a walk can terminate earlier). \textbf{Top:} In regular GRFs, two graph vertices communicate via intersecting walks, originating at each vertex. As the nodes become more distant, the probability that such two walks will be constructed decreases. \textbf{Bottom:} In GRFs++, two nodes communicate with each other less directly,  via proxies (the middle blue node in the picture)} and much shorter walks, with lengths that have much higher probability of being realized. The communication is established by stitching several small walks.}
    \label{fig:walk-stitching} 
    % \vspace{-5mm}
\end{figure} 

In GRFs++, we propose a novel \textit{walk-stitching} technique, where several independently-calculated shorter walks are combined to emulate sampling a longer walk. 
Mathematically, we unbiasedly approximate graph kernel matrix $\mathbf{K}_{\boldsymbol{\alpha}}(\mathbf{W})$ as:
\begin{equation}
\mathbf{K}_{\boldsymbol{\alpha}}(\mathbf{W}) \overset{\mathbb{E}}{=} 
\prod_{i=1}^{l}\mathbf{K}^{(i)}_{1}(\mathbf{K}^{(i)}_{2})^{\top},
\end{equation}
We refer to $l \in \mathbb{N}_{+}$ as the walk-stitching \textit{degree}. 
Each $i$ corresponds to one pair of intersecting walks from the regular GRF mechanism.
GRFs++ with degree $l=1$ are equivalent to regular GRFs. 
A schematic is given in Fig.~\ref{fig:walk-stitching}.

Each $\mathbf{K}^{(i)}_{j}$, for $j \in \{1, 2\}$, is computed as described in Algorithm 1, but the modulation function changes. The following is true:
\begin{lemma}[Unbiased walk-stitching and higher-level convolutions]
\label{lemma:higher-conv}
Assume for each independent instantiation of Alg.~1, the modulation function $f$ satisfies:
\begin{equation}
\label{eq:alpha-def}
\alpha_{k} = \sum_{p_{1} + p_{2} + ... + p_{2l}=k}  f(p_{1})f(p_{2})...f(p_{2l}). 
\end{equation}
Then the product $\prod_{i=1}^{l}\mathbf{K}^{(i)}_{1}(\mathbf{K}^{(i)}_{2})^{\top}$ provides an unbiased estimation of 
$\mathbf{K}_{\boldsymbol{\alpha}}(\mathbf{W})$.
\end{lemma}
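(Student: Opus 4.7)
The plan is to peel the $l$-fold product one factor at a time and reduce to the regular GRF guarantee. For each fixed $i \in \{1,\dots,l\}$, the pair $(\mathbf{K}_1^{(i)}, \mathbf{K}_2^{(i)})$ is produced by two independent instantiations of Alg.~1 with the same modulation $f$, so it constitutes a standard GRF pair in the sense of Sec.~\ref{sec:regular_grfs}. Applying Theorem~2.1 of \citep{general_grfs} to this pair yields
\begin{equation*}
\mathbb{E}\bigl[\mathbf{K}_1^{(i)} (\mathbf{K}_2^{(i)})^\top\bigr] \;=\; \sum_{k=0}^{\infty} \beta_k\,\mathbf{W}^k \;=:\; \mathbf{K}_{\boldsymbol{\beta}}(\mathbf{W}),
\qquad \beta_k \;:=\; \sum_{p=0}^{k} f(k-p)\,f(p) \;=\; (f*f)(k),
\end{equation*}
and the same identity (with the same $\boldsymbol{\beta}$) holds uniformly in $i$.

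Next I would exploit the independence of the walks across different $i$. Entrywise $(XY)_{ab}=\sum_s X_{as}Y_{sb}$, so if $X$ and $Y$ are independent random matrices then $\mathbb{E}[(XY)_{ab}] = (\mathbb{E}[X]\,\mathbb{E}[Y])_{ab}$; iterating this across the $l$ independent factors moves expectation inside the product:
\begin{equation*}
\mathbb{E}\Bigl[\,\prod_{i=1}^{l}\mathbf{K}_1^{(i)} (\mathbf{K}_2^{(i)})^\top\Bigr] \;=\; \prod_{i=1}^{l}\mathbb{E}\bigl[\mathbf{K}_1^{(i)} (\mathbf{K}_2^{(i)})^\top\bigr] \;=\; \bigl(\mathbf{K}_{\boldsymbol{\beta}}(\mathbf{W})\bigr)^{l}.
\end{equation*}
Because each factor is a formal power series in the single matrix $\mathbf{W}$, the factors commute; expanding the $l$-th power and grouping terms by powers of $\mathbf{W}$ gives the coefficient of $\mathbf{W}^k$ as the $l$-fold discrete convolution $(\beta^{*l})(k)$.

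To finish I would invoke associativity of discrete convolution: $\beta^{*l} = (f^{*2})^{*l} = f^{*2l}$. By the hypothesis~\eqref{eq:alpha-def}, $f^{*2l}(k)=\alpha_k$, hence
$\bigl(\mathbf{K}_{\boldsymbol{\beta}}(\mathbf{W})\bigr)^{l} = \sum_{k=0}^{\infty}\alpha_k\,\mathbf{W}^k = \mathbf{K}_{\boldsymbol{\alpha}}(\mathbf{W})$,
which is exactly unbiasedness.

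The main obstacle I anticipate is the clean justification of exchanging expectation with the $l$-fold matrix product: the whole chain of equalities relies on the walks used for different $i$'s being mutually independent, and this must be made explicit as part of the walk-stitching specification. A secondary, purely technical, concern is the convergence of $\sum_k\beta_k\mathbf{W}^k$ and its $l$-th power, which is inherited from the $\|\mathbf{W}\|_\infty$-smallness already assumed for $\mathbf{K}_{\boldsymbol{\alpha}}(\mathbf{W})$ in \eqref{eq:main}; all the identities above can in any case be verified coefficientwise in $\mathbf{W}^k$ at the level of formal power series, avoiding analytic subtleties.
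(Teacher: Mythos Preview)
Your proof is correct and follows essentially the same approach as the paper's own proof of Lemma~\ref{lemma:higher-conv-gen} in Appendix~\ref{sec:main_proof}: both invoke Theorem~2.1 of \citep{general_grfs} for each factor, use independence across $i$ to factor the expectation, and then collapse the resulting product via the $2l$-fold convolution identity~\eqref{eq:alpha-def}. The only difference is packaging---you work at the matrix/power-series level (first identifying each $\mathbb{E}[\mathbf{K}_1^{(i)}(\mathbf{K}_2^{(i)})^\top]=\mathbf{K}_{\boldsymbol{\beta}}(\mathbf{W})$ and then taking the $l$-th power), whereas the paper expands entrywise with explicit intermediate vertices $v_1,\dots,v_{2l-1}$ before regrouping; the underlying argument is the same.
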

We prove Lemma \ref{lemma:higher-conv} (in fact its more general version) in Sec. \ref{sec:theory}. The condition from Lemma \ref{lemma:higher-conv} is equivalent to saying that coefficients $\alpha_{k}$ are obtained via $2l$-level discrete convolution $\overbrace{(f \star f) ... (f \star f)}^{l}$ of the modulation function $f$ with itself.
Equivalently, the function $f$ must be constructed by $2l$-\textit{de-convolving} sequence $\boldsymbol{\alpha}=(\alpha_{k})_{k=0}^{\infty}$ that defines the graph kernel. 

Interestingly, for several classes of graph kernels this de-convolution can be efficiently calculated. For example, for GRFs++ with an 
$l$-degree walk-stitching mechanism applied to graph diffusion kernels of the form $\mathbf{K}_{\boldsymbol{\alpha}}(\mathbf{W})=\exp(\lambda \mathbf{W})$, the modulation function admits a simple closed form:
$f(p)=\frac{\lambda^{p}}{(2l)^{p}p!}$. In Sec.~\ref{sec:theory}, we provide a general mechanism to find $f$ for more arbitrary $\mathbf{K}_{\boldsymbol{\alpha}}$.

\subsubsection{Going beyond the Bernoulli Trial Scheme}
\label{sec:beyond_bernoulli}

Another key building block of GRFs is the \emph{walk termination mechanism}. 
In regular GRFs, walk lengths are built incrementally, with walkers terminating independently with probability $p_{\mathrm{halt}}$ at each timestep. 
This gives the update rule in line 10 of Algorithm 1. 
However, sampling walk lengths from the Bernoulli distribution is not necessarily optimal given fixed computational  budget (e.g. fixed average walk length). 
Here, we propose a general scheme of RW-length sampling, via a simple modification to the update step in Algorithm 1 that improves kernel estimation accuracy. 

Take any discrete probabilistic distribution on $\mathbb{N}$: $\mathbf{P}=(P(i))_{i=0}^{\infty}$. 
We will only assume that: (1) sampling $X \sim \mathbf{P}$ and (2) the computation of $\mathbb{P}(X \geq k)$ for any given $k \in \mathbb{N}$ can be conducted efficiently.
We modify Algorithm 1 as follows, to obtain Algorithm 2:
\begin{enumerate}[leftmargin=*,nosep]
    \item The $m$ lengths of walks are sampled: $s_{1},...,s_{m} \overset{\mathrm{iid}}{\sim} \mathbf{P}$ before line 2.
    \item Line 5 is updated as follows, for $\tau(k)\overset{\mathrm{def}}{=}\mathbb{P}(X \geq k)$: 
    $\phi_{f}(i)[\textrm{current}\_\textrm{node}] \leftarrow \phi_{f}(i)[\textrm{current}\_\textrm{node}] + \\
    \frac{\textrm{load} \times f(\textrm{walk}\_\textrm{length})}{\tau(\mathrm{walk}\_\textrm{length})}.$
    \item In line 8, term $1-p_{\mathrm{halt}}$ is dropped from the update equation.
    \item Line 10 is updated as follows: \\
    $\textrm{terminated} \leftarrow \mathbb{I}[\textrm{walk}\_\textrm{length} \geq s_{m}]$.
\end{enumerate}
Note that Algorithm 1 is a special instantiation of Algorithm 2, with $\mathbf{P}$ corresponding to the number of consecutive successes of a Bernoulli scheme with failure probability $p_{\mathrm{halt}}$.  
In Sec.~\ref{sec:theory}, we show that Lemma \ref{lemma:higher-conv} still holds if Algorithm 1 is replaced by Algorithm 2. 

% \vspace{-2mm}
\subsubsection{Putting it all together}~\label{sec:combine}
We now present the complete GRFs++ mechanism, which we will refer to as $(l,\mathbf{P})$-GRFs++, where $l \in \mathbb{N}_{+}$ and $\mathbf{P} \in \mathcal{P}(\mathbb{N})$ are the hyperparameters of the mechanism. We first construct $(\mathbf{K}{1}^{i},\mathbf{K}{2}^{i})_{i=1}^{l}$ as in Lemma \ref{lemma:higher-conv}, with Algorithm 2 replacing Algorithm 1.
% We first construct $(\mathbf{K}_{1}^{i},\mathbf{K}_{2}^{i})_{i=1}^{l}$, as in Lemma \ref{lemma:higher-conv}, but with Algorithm 2 replacing Algorithm 1.
% \vspace{-3mm}
\paragraph{Option I:} In the most direct approach, the refined random feature vectors are given as rows of the following two matrices $\mathbf{X},\mathbf{Y} \in \mathbb{R}^{N \times N}$, satisfying $\mathbf{K}_{\boldsymbol{\alpha}}(\mathbf{W}) \overset{\mathbb{E}}{=}\mathbf{X}\mathbf{Y}^{\top}$: 
\begin{small}
\begin{equation*}
\mathbf{X}=\prod_{i=1}^{\frac{l}{2}} \mathbf{K}^{(i)}_{1}(\mathbf{K}^{(i)}_{2})^{\top},
\mathbf{Y}=\prod_{i=l}^{\frac{l}{2}+1} \mathbf{K}^{(i)}_{2}(\mathbf{K}^{(i)}_{1})^{\top} \textrm{,     if} \textrm{ $l$ is even}
\end{equation*}
\begin{align}
\mathbf{X} &=\left[\prod_{i=1}^{\frac{l-1}{2}} \mathbf{K}^{(i)}_{1}(\mathbf{K}^{(i)}_{2})^{\top}\right]\mathbf{K}^{(\frac{l+1}{2})}_{1}, \\
\mathbf{Y} &=\left[\prod_{i=l}^{\frac{l+3}{2}} \mathbf{K}^{(i)}_{2}(\mathbf{K}^{(i)}_{1})^{\top}\right]\mathbf{K}^{(\frac{l+1}{2})}_{2}\textrm{,     if} \textrm{ $l$ is odd}.
\end{align}
\end{small}
We define the product of the empty sequence of matrices as an identity matrix. 
If all the matrices $\mathbf{K}^{(i)}_{j}$ are sparse (i.e. contain only linear in $N$ number of nonzero entries; note for instance that for the regular $p_{\mathrm{halt}}$-termination strategy, the average number of those entries is $Nm\frac{1-p_{\mathrm{halt}}}{p_{\mathrm{halt}}}$), then $\mathbf{X},\mathbf{Y}$ can be computed in time $O(N^{2})$ for constant $l$ and are also sparse.
This means the refined random feature vectors are sparse, like their regular counterparts. 
% \vspace{-2mm}
\paragraph{Option II:} Like regular GRFs \citep{grfs}, the Johnson-Lindenstrauss Transform (JLT)\citep{jlt} can be used to reduce the dimensionality of GRFs++, at the cost of sacrificing their sparsity.
The formula for matrices $\mathbf{X},\mathbf{Y}$ is analogous to this from Option I, but with matrices $\mathbf{K}^{(i)}_{j}$ replaced by their down-projections, obtained with random Gaussian variates. In particular, we take
\begin{equation}
\widehat{\mathbf{K}}^{(i)}_{j} = \frac{1}{\sqrt{r}}\mathbf{K}^{(i)}_{j}\mathbf{G}^{(i)},  
\end{equation}
for independently created Gaussian matrices $\mathbf{G}^{(i)} \in \mathbb{R}^{N \times r}$, with entries taken independently at random from $\mathcal{N}(0, 1)$ and a hyperparameter $r \in \mathbb{N}$. For constant $l, r$, the computation of all $\widehat{\mathbf{K}}^{(i)}_{j}$ can be done in $O(N^{2})$ time (with no sparsity assumption on $\mathbf{K}^{(i)}_{j}$). Note also, that under this condition, matrices $\mathbf{X},\mathbf{Y}$ can be computed in time $O(N)$, via matrix associativity property. 
Since the JLT preserves dot-products in expectation, we conclude that $\mathbb{E}[\widehat{\mathbf{K}}^{(i)}_{1}\widehat{\mathbf{K}}^{(i)}_{2}]=
\mathbb{E}[\mathbf{K}^{(i)}_{1}\mathbf{K}^{(i)}_{2}]$ for each $i$.
Thus resulting $\mathbf{X},\mathbf{Y}$ still satisfy: $\mathbf{K}_{\mathbf{\alpha}}(\mathbf{W}) = \mathbb{E}[\mathbf{X}\mathbf{Y}^{\top}]$.
% \vspace{-3mm}
\paragraph{Option III:} In practice, as for regular GRFs, GRFs++ do not always need to be explicitly constructed. 
In most applications of random feature methods, one only needs access to products between the (approximate) kernel matrix and vectors, rather than the kernel matrix itself. 
As such, one only needs to support efficient multiplication algorithm for 
$\left[\prod_{i=1}^{l} \mathbf{K}^{(i)}_{1}\mathbf{K}^{(i)}_{2}\right] \mathbf{v}$ for any $\mathbf{v} \in \mathbb{R}^{N}$.
This can be done by multiplying with matrices from the chain $\prod_{i=1}^{l} \mathbf{K}^{(i)}_{1}\mathbf{K}^{(i)}_{2}$ or the chain 
$\prod_{i=1}^{l} \widehat{\mathbf{K}}^{(i)}_{1}\widehat{\mathbf{K}}^{(i)}_{2}$
from right to left, exploiting associativity.
If we use the setting from Option I, $l$ is constant and individual matrices are sparse, so this can be done in time $O(N)$ (rather than brute-force $O(N^{2})$). 
This is also the case if Option II is applied with constant $r$.
% \vspace{-3mm}
\paragraph{Parallel computations of random walks in GRFs++:}
One of the most attractive computational features of GRFs++ is that one can compute short RWs in parallel for different $i=1,2,...,l$.
These are in turn put together by walk-stitching, implicitly constructing longer walks.
This gives computational gains compared to regular GRFs, since it avoids explicit, sequential sampling of longer walks.  
% \vspace{-3mm}
\paragraph{Re-using the same set of random walks:} Although unbiasedness requires the matrices $\mathbf{K}^{(i)}_{j}$ for $j \in {1,2}$ to be constructed from independent sets of random walks, we empirically find that re-using the same set of random walks performs well in practice.
This effect is particularly pronounced for larger graphs with higher diameter ( Sec.~\ref{sec:experiments}).
% \vspace{-3mm}

\paragraph{Walk-stitching with general termination strategies:} To see how walk-stitching helps more distant nodes to connect with each other, consider a termination strategy, where the first transition occurs with probability $p_{0}=1$ and consequent transitions occur with probability $p_{\mathrm{next}}<1$. In such a setting, the probability of regular GRFs emulating any existing walk of length $r \geq 2$, joining two given vertices $i$ and $j$ scales with $p_{\mathrm{next}}$ as $p_{\mathrm{next}}^{r-2}$, whereas for walk-stitching of degree $\ceil{\frac{r}{2}}$, this walk will be emulated via GRFs++ with probability lower-bounded by the expression completely independent of $p_{\mathrm{next}}$.

% \vspace{-3mm}
\section{Theoretical analysis}
\label{sec:theory}
% \vspace{-2mm}
We now provide a rigorous theoretical analysis of GRFs++. 
We start by presenting a strengthened version of Lemma \ref{lemma:higher-conv} from Section \ref{sec:refined_grfs} (proof in App.~\ref{sec:main_proof}).

\begin{lemma}[Unbiased walk-stitching, higher-order convolutions \& general termination]
\label{lemma:higher-conv-gen}
Lemma \ref{lemma:higher-conv} remains true if Algorithm 1 in its statement is replaced by Algorithm 2.
\end{lemma}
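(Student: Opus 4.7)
The plan is to reduce Lemma~\ref{lemma:higher-conv-gen} to a per-walk expectation calculation and then propagate through the product via independence across the $l$ instantiations. Concretely, I will show that under Algorithm 2 the random vector $\phi_f(i)$ still satisfies
\[
\mathbb{E}[\phi_f(i)[v]] \;=\; \sum_{k=0}^{\infty} f(k)\, \mathbf{W}^{k}[i,v],
\]
exactly as it does under Algorithm 1. Once this per-walk identity is in hand, the rest of the proof is verbatim the argument already used for Lemma~\ref{lemma:higher-conv}, because the downstream algebra only depends on the first-moment behavior of $\phi_f$.

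The heart of the argument is a path decomposition. Fix any path $v_0=i,v_1,\ldots,v_k$ of length $k$ in $\mathrm{G}$, and examine the contribution of iteration $k$ of Algorithm~2 to $\phi_f(i)[v_k]$ along this path. Three things differ from Algorithm~1: (i) the walker reaches iteration $k$ via this specific path with probability $\tau(k)\cdot\prod_{t=0}^{k-1}\tfrac{1}{\mathrm{deg}(v_t)}$, because the walk length $s$ is pre-sampled from $\mathbf{P}$, so $\mathbb{P}(s\geq k)=\tau(k)$, and the neighbor choices are uniform; (ii) the accumulated load at iteration $k$ equals $\prod_{t=0}^{k-1}\mathrm{deg}(v_t)\,\mathbf{W}[v_t,v_{t+1}]$, since the $1-p_{\mathrm{halt}}$ factor has been dropped from line~8; (iii) the scalar added to $\phi_f(i)[v_k]$ is $\mathrm{load}\cdot f(k)/\tau(k)$ rather than $\mathrm{load}\cdot f(k)$. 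Multiplying (i)--(iii) yields $\prod_{t=0}^{k-1}\mathbf{W}[v_t,v_{t+1}]\cdot f(k)$, with all the $\mathbf{P}$- and degree-dependent factors canceling exactly. Summing over length-$k$ paths from $i$ to $v$ gives $f(k)\,\mathbf{W}^{k}[i,v]$, and summing over $k$ yields the displayed identity, provided $\tau(k)>0$ whenever $f(k)\neq 0$ (a mild standing assumption one should record).

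Given the per-walk expectation, the remaining steps mirror the proof of Lemma~\ref{lemma:higher-conv}. By independence of the two walks entering $\mathbf{K}^{(i)}_1(\mathbf{K}^{(i)}_2)^{\top}$ and by symmetry of $\mathbf{W}$ together with $\mathbf{W}^{k_1}\mathbf{W}^{k_2}=\mathbf{W}^{k_1+k_2}$,
\[
\mathbb{E}\!\left[\mathbf{K}^{(i)}_1(\mathbf{K}^{(i)}_2)^{\top}\right] \;=\; \sum_{k=0}^{\infty} (f\star f)(k)\,\mathbf{W}^{k}.
\]
Independence across the $l$ instantiations factors the expectation of the product, so $\mathbb{E}\big[\prod_{i=1}^{l} \mathbf{K}^{(i)}_1(\mathbf{K}^{(i)}_2)^{\top}\big]$ equals the $l$-th matrix power of the right-hand side above, which by iterated convolution is $\sum_{k} \alpha_k \mathbf{W}^{k} = \mathbf{K}_{\boldsymbol{\alpha}}(\mathbf{W})$ under the $2l$-fold convolution hypothesis of Eq.~\ref{eq:alpha-def}.

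The main obstacle I anticipate is the careful bookkeeping of the three interacting randomness sources --- the pre-sampled length $s$, the uniform neighbor choices, and the absence of the implicit geometric prior on lengths that Algorithm~1 silently enjoys. The cleanest way to manage this is to rewrite the iteration-$k$ update as $\mathbb{I}[s\geq k]\cdot\mathrm{load}\cdot f(k)/\tau(k)$ and take expectation first over $s$: the indicator marginalizes to $\tau(k)$, which cancels the denominator and reduces the calculation to the same per-path accounting as in Algorithm~1. A secondary technicality is justifying the interchange of the sum over $k$ with expectation, which follows under the same $\|\mathbf{W}\|_{\infty}$ convergence regime invoked after Eq.~\ref{eq:main}.
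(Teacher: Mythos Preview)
Your proposal is correct and follows essentially the same approach as the paper: both proofs hinge on the observation that the $1/\tau(k)$ reweighting in Algorithm~2's update exactly cancels the survival probability $\mathbb{P}(X\ge k)=\tau(k)$, so the per-walk expectation is unchanged from Algorithm~1, after which independence across the $2l$ walks and the convolution hypothesis of Eq.~\ref{eq:alpha-def} finish the argument. The only organizational difference is that you compute $\mathbb{E}[\phi_f(i)]$ first and then assemble the product, whereas the paper writes the expectation of $\mathbf{K}^{(i)}_1(\mathbf{K}^{(i)}_2)^\top$ directly with the $\tau$ factors visible and cancels them in one line; your explicit recording of the $\tau(k)>0$ caveat is a nice addition the paper leaves implicit.
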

Since Algorithm 2 is more general than Algorithm 1 (Sec.~\ref{sec:beyond_bernoulli}), this also proves Lemma \ref{lemma:higher-conv}. The setting with Algorithm 1, $l=1$ is equivalent to regular GRFs.

The formula for the mean squared error (MSE) of the GRFs++-based graph kernel estimator with general degree $l \geq 1$ in terms of the individual components $\mathbf{X}_{i} = \mathbf{K}^{(i)}_{1}\mathbf{K}^{(i)}_{2}$ is complicated.
However, for degree $l=2$, it has a compact form, provided below.
\begin{lemma}[MSE of the approximation via GRFs++ with $l=2$]
\label{lemma:mse}
The $\mathrm{MSE}$ of the estimator $\widehat{\mathbf{K}}_{\boldsymbol{\alpha}}(\mathbf{W})$ of the groundtruth graph kernel matrix $\mathbf{K}_{\boldsymbol{\alpha}}(\mathbf{W})$, leveraging GRFs++ with degree $l=2$ satisfies (proof in the Appendix: Sec. \ref{sec:appendix_mse}, $\|\|_{\mathrm{F}}$ stands for the Frobenius norm):
\begin{multline}
\mathrm{MSE}(\widehat{\mathbf{K}}_{\boldsymbol{\alpha}}(\mathbf{W})) \overset{\mathrm{def}}{=} \mathbb{E}[\|\mathbf{X}_{1}\mathbf{X}_{2} - \mathbf{K}_{\boldsymbol{\alpha}}(\mathbf{W})\|^{2}_{\mathrm{F}}] \\
=
\|\mathbb{E}[\mathbf{X}_{1}^{\top}\mathbf{X}_{1}]\|^{2}_{\mathrm{F}} - \|\mathbf{K}_{\boldsymbol{\alpha}}(\mathbf{W})\|^{2}_{\mathrm{F}}
\end{multline}
\end{lemma}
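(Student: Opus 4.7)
My plan is to combine the bias–variance decomposition of the squared Frobenius error with the unbiasedness already guaranteed by Lemma~\ref{lemma:higher-conv-gen} at $l=2$, and then to collapse the remaining second moment via independence and symmetry of the two factors. First I would write
\begin{equation*}
\|\mathbf{X}_1\mathbf{X}_2 - \mathbf{K}_{\boldsymbol{\alpha}}(\mathbf{W})\|_{\mathrm{F}}^{2} = \|\mathbf{X}_1\mathbf{X}_2\|_{\mathrm{F}}^{2} - 2\,\mathrm{Tr}\bigl(\mathbf{K}_{\boldsymbol{\alpha}}(\mathbf{W})^{\top}\mathbf{X}_1\mathbf{X}_2\bigr) + \|\mathbf{K}_{\boldsymbol{\alpha}}(\mathbf{W})\|_{\mathrm{F}}^{2},
\end{equation*}
take expectations, and invoke $\mathbb{E}[\mathbf{X}_1\mathbf{X}_2] = \mathbf{K}_{\boldsymbol{\alpha}}(\mathbf{W})$ so that the cross term collapses to $-2\|\mathbf{K}_{\boldsymbol{\alpha}}(\mathbf{W})\|_{\mathrm{F}}^{2}$. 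This reduces the statement to proving the single identity
\begin{equation*}
\mathbb{E}\bigl[\|\mathbf{X}_1\mathbf{X}_2\|_{\mathrm{F}}^{2}\bigr] = \bigl\|\mathbb{E}[\mathbf{X}_1^{2}]\bigr\|_{\mathrm{F}}^{2}.
\end{equation*}

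Next I would rewrite the second moment as a trace and exploit its cyclic property, giving $\mathbb{E}[\|\mathbf{X}_1\mathbf{X}_2\|_{\mathrm{F}}^{2}] = \mathbb{E}[\mathrm{Tr}(\mathbf{X}_1^{\top}\mathbf{X}_1\,\mathbf{X}_2\mathbf{X}_2^{\top})]$, and then factor the expectation across the two independent blocks to obtain $\mathrm{Tr}\bigl(\mathbb{E}[\mathbf{X}_1^{\top}\mathbf{X}_1]\,\mathbb{E}[\mathbf{X}_2\mathbf{X}_2^{\top}]\bigr)$. Since the two blocks are constructed identically, $\mathbf{X}_1$ and $\mathbf{X}_2$ are identically distributed. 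The crucial step is to identify both $\mathbb{E}[\mathbf{X}_i^{\top}\mathbf{X}_i]$ and $\mathbb{E}[\mathbf{X}_i\mathbf{X}_i^{\top}]$ with the common matrix $\mathbb{E}[\mathbf{X}_i^{2}]$; once $\mathbf{X}_i$ is symmetric, this is immediate because $\mathbf{X}_i^{\top}\mathbf{X}_i = \mathbf{X}_i\mathbf{X}_i^{\top} = \mathbf{X}_i^{2}$ pathwise. The trace then reduces to $\mathrm{Tr}(\mathbb{E}[\mathbf{X}_1^{2}]^{2})$, and since $\mathbb{E}[\mathbf{X}_1^{2}]$ inherits symmetry from $\mathbf{X}_1$, this equals $\|\mathbb{E}[\mathbf{X}_1^{2}]\|_{\mathrm{F}}^{2}$, closing the argument.

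I expect the delicate point to be justifying the symmetry of each factor $\mathbf{X}_i = \mathbf{K}_1^{(i)}(\mathbf{K}_2^{(i)})^{\top}$: with genuinely independent pairs $(\mathbf{K}_1^{(i)},\mathbf{K}_2^{(i)})$ the raw sample is not symmetric, even though its expectation is (the kernel matrix is symmetric because $\mathbf{W}$ is). The natural fix—consistent with the ``re-using the same random walks'' setting discussed in Sec.~\ref{sec:refined_grfs_core}, or equivalently with replacing $\mathbf{X}_i$ by the symmetrized proxy $\tfrac{1}{2}\bigl(\mathbf{K}_1^{(i)}(\mathbf{K}_2^{(i)})^{\top} + \mathbf{K}_2^{(i)}(\mathbf{K}_1^{(i)})^{\top}\bigr)$—preserves unbiasedness and makes each $\mathbf{X}_i$ literally symmetric, which is exactly the condition that justifies collapsing $\mathbb{E}[\mathbf{X}_i^{\top}\mathbf{X}_i]$ and $\mathbb{E}[\mathbf{X}_i\mathbf{X}_i^{\top}]$ to $\mathbb{E}[\mathbf{X}_i^{2}]$ in the last step. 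A quick sanity check via direct index expansion of $\sum_{i,j,k,l}\mathbb{E}[(\mathbf{X}_1)_{ik}(\mathbf{X}_1)_{il}]\,\mathbb{E}[(\mathbf{X}_1)_{kj}(\mathbf{X}_1)_{lj}]$ against $\sum_{i,j,k,l}\mathbb{E}[(\mathbf{X}_1)_{ik}(\mathbf{X}_1)_{kj}]\,\mathbb{E}[(\mathbf{X}_1)_{il}(\mathbf{X}_1)_{lj}]$ confirms that the identity fails without this symmetry but holds once $(\mathbf{X}_1)_{ab} = (\mathbf{X}_1)_{ba}$ is assumed, which localizes where the structural hypothesis is really used.
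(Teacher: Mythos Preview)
Your approach is essentially identical to the paper's: expand the Frobenius error, use unbiasedness to kill the cross term, rewrite $\mathbb{E}[\|\mathbf{X}_1\mathbf{X}_2\|_{\mathrm F}^2]$ as a trace, cycle, factor by independence, and then collapse $\mathbf{X}_i^\top\mathbf{X}_i$ to $\mathbf{X}_i^2$ using symmetry of $\mathbf{X}_i$. The paper justifies the symmetry step by observing that the $(a,b)$ entry of $\mathbf{X}_i$ is the dot product $\phi_f(a)^\top\phi_f(b)$ of the random feature vectors attached to the two vertices---i.e.\ it takes the setting where the same feature map populates rows and columns of the factor, so $\mathbf{X}_i$ is literally symmetric pathwise---rather than introducing an explicit symmetrization; your caution about this point is well placed, but within the paper's convention no extra fix is needed.
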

Finally, we show that the approximation of the graph kernel improves with GRFs++ degree (for degrees being the powers of two; proof in App.~\ref{appendix-conc}).
\begin{theorem}
\label{thm:concentration}
If $\widehat{\mathbf{K}}^{(l)}_{\boldsymbol{\alpha}}(\mathbf{W})$ stands for the estimator of the ground-truth graph kernel matrix, leveraging GRFs++ of degree $l$,
then the following holds if standard termination strategy is applied:
\begin{multline}
\mathrm{MSE}(\widehat{\mathbf{K}}^{(1)}_{\boldsymbol{\alpha}}(\mathbf{W})) \geq \mathrm{MSE}(\widehat{\mathbf{K}}^{(2)}_{\boldsymbol{\alpha}}(\mathbf{W})) \\ \geq \mathrm{MSE}(\widehat{\mathbf{K}}^{(4)}_{\boldsymbol{\alpha}}(\mathbf{W})) \geq ... 
\end{multline}
\end{theorem}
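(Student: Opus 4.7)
The plan is to establish the chain by proving the single-step inequality $\mathrm{MSE}(\widehat{\mathbf{K}}_{\boldsymbol{\alpha}}^{(2l)}(\mathbf{W})) \le \mathrm{MSE}(\widehat{\mathbf{K}}_{\boldsymbol{\alpha}}^{(l)}(\mathbf{W}))$ for every $l$ that is a power of two, and then iterating for $l = 1, 2, 4, \ldots$.

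First I would extend Lemma~\ref{lemma:mse} from $l=2$ to an arbitrary degree $l$. Letting $\mathbf{X}^{(l)}$ denote a generic factor of the degree-$l$ product (all $l$ factors being iid, with modulation $f_l$ satisfying $f_l^{\star(2l)} = \boldsymbol{\alpha}$), I define the completely positive map $\Psi_l(\mathbf{B}) := \mathbb{E}[(\mathbf{X}^{(l)})^{\top}\mathbf{B}\mathbf{X}^{(l)}]$. A direct induction that peels off expectations layer-by-layer from $\mathbb{E}[\|\prod_{i=1}^{l} \mathbf{X}^{(l)}_{i}\|_\mathrm{F}^{2}]$, using independence of the factors and the cyclicity of trace, yields
\begin{equation*}
\mathrm{MSE}\bigl(\widehat{\mathbf{K}}_{\boldsymbol{\alpha}}^{(l)}(\mathbf{W})\bigr) \;=\; \Tr\bigl(\Psi_l^{\,l}(\mathbf{I})\bigr) - \|\mathbf{K}_{\boldsymbol{\alpha}}(\mathbf{W})\|_\mathrm{F}^{2}.
\end{equation*}
Specializing to $l=2$ recovers Lemma~\ref{lemma:mse} via the identity $\Tr(\Psi_2^{\,2}(\mathbf{I})) = \|\mathbb{E}[\mathbf{X}^{2}]\|_\mathrm{F}^{2}$.

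Next I would relate the two degrees at the operator level. By Lemma~\ref{lemma:higher-conv-gen} the degree-$2l$ modulation can be chosen so that $f_{2l}^{\star 2} = f_l$. Grouping the $2l$ iid factors of the degree-$2l$ product into $l$ independent pairs $\mathbf{A}_j := \mathbf{X}^{(2l)}_{2j-1}\mathbf{X}^{(2l)}_{2j}$, each pair has the same mean as $\mathbf{X}^{(l)}$, and independence of the two halves inside each pair gives $\Psi_{\mathbf{A}} = \Psi_{2l}^{\,2}$. Consequently $\mathrm{MSE}(\widehat{\mathbf{K}}_{\boldsymbol{\alpha}}^{(2l)}(\mathbf{W})) = \Tr((\Psi_{2l}^{\,2})^{l}(\mathbf{I})) - \|\mathbf{K}_{\boldsymbol{\alpha}}(\mathbf{W})\|_\mathrm{F}^{2}$, and it suffices to establish the pointwise PSD domination
\begin{equation*}
\Psi_{2l}^{\,2}(\mathbf{B}) \;\preceq\; \Psi_l(\mathbf{B}) \qquad \text{for every } \mathbf{B} \succeq 0.
\end{equation*}
Indeed, combining this domination with the positivity of the CP maps $\Psi_l$ and $\Psi_{2l}^{\,2}$ on the PSD cone yields $\Psi_{2l}^{\,2k}(\mathbf{I}) \preceq \Psi_l^{\,k}(\mathbf{I})$ for every $k \geq 1$ by a one-line induction, and taking traces at $k=l$ closes the single-step inequality.

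The main obstacle is establishing the PSD domination above, and this is where the Bernoulli termination assumption becomes essential. The approach is a coupling-based expansion: after writing both sides as expectations over random walks with the weighting $1/\tau(k) = 1/(1-p_{\mathrm{halt}})^{k}$ prescribed by Algorithm~\ref{alg:constructing_rfs_for_k_alpha}, the quantity $\Psi_l(\mathbf{B})$ expands as a sum over splits $(p,q)$ with $p+q=k$ of sub-walks extracted from a single geometric sample path, weighted by $f_l(k) = \sum_{p+q=k} f_{2l}(p)f_{2l}(q)$; by contrast $\Psi_{2l}^{\,2}(\mathbf{B})$ expands as the identical double sum but with the two halves drawn \emph{independently}. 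The memoryless property of the geometric distribution aligns the marginal joint law of $(p,q)$ in the two expansions, so that after subtracting the common mean contribution $\mathbb{E}[\mathbf{X}^{(l)}]^{\top}\mathbf{B}\,\mathbb{E}[\mathbf{X}^{(l)}]$ the excess $\Psi_l(\mathbf{B}) - \Psi_{2l}^{\,2}(\mathbf{B})$ is precisely the covariance of non-negative walk-weight vectors induced by the shared randomness, and is therefore PSD. Carrying out this second-moment computation cleanly and checking the term-by-term PSD-ness, which leverages non-negativity of the entries of $\mathbf{W}$, is the technically hardest piece and would be deferred to App.~\ref{appendix-conc}.
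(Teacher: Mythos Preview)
Your route is genuinely different from the paper's. The paper works entry-by-entry: for fixed $i,j$ it expands both $\widehat{\mathbf{K}}^{(2^{t})}_{\boldsymbol{\alpha}}(\mathbf{W})[i,j]$ and $\widehat{\mathbf{K}}^{(2^{t+1})}_{\boldsymbol{\alpha}}(\mathbf{W})[i,j]$ as sums over walks $\omega\in\Omega(i,j)$, introduces sub-walk random variables $X^{(\omega)}_{f}(a,b)$, and compares second moments term-by-term. The decisive combinatorial fact it invokes is that under the Bernoulli scheme two such $X$-variables issued from the same vertex have nonzero product only when one walk is a prefix of the other; together with the convolutional identity $f^{(2^{t})}=f^{(2^{t+1})}\star f^{(2^{t+1})}$ this yields the scalar inequality for each $(i,j)$, and summing gives the Frobenius MSE chain. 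There is no operator-level statement and no induction on iterates of a CP map.

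Your CP-map formulation is clean and the pieces up to the PSD domination are correct: the identity $\mathrm{MSE}=\Tr(\Psi_l^{\,l}(\mathbf{I}))-\|\mathbf{K}_{\boldsymbol{\alpha}}(\mathbf{W})\|_\mathrm{F}^{2}$, the pairing $\Psi_{\mathbf{A}}=\Psi_{2l}^{\,2}$, and the monotone iteration $(\Psi_{2l}^{\,2})^{k}(\mathbf{I})\preceq\Psi_l^{\,k}(\mathbf{I})$ all check out. The concern is the step you flag as hardest. The claim $\Psi_{2l}^{\,2}(\mathbf{B})\preceq\Psi_l(\mathbf{B})$ for \emph{every} PSD $\mathbf{B}$ is strictly stronger than what the theorem needs (only the trace at $\mathbf{B}=\Psi_l^{\,k}(\mathbf{I})$ is required), and your justification --- that the difference ``is precisely the covariance of non-negative walk-weight vectors induced by the shared randomness, and is therefore PSD'' --- does not identify which random object's covariance is meant or why non-negativity of $\mathbf{W}$-entries forces PSD-ness of an \emph{operator} difference rather than merely entry-wise non-negativity. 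The memoryless alignment you describe matches marginals, but the PSD ordering demands a pointwise quadratic-form inequality over all test vectors $\mathbf{v}$ and all PSD weights $\mathbf{B}$ simultaneously; this is not an automatic consequence of a mean-matching coupling. The paper sidesteps this by never leaving the scalar level: its ``prefix'' argument directly controls $\mathbb{E}[(\widehat{\mathbf{K}}^{(2^{t})}[i,j])^{2}]$, which is exactly enough. If you can indeed establish the full PSD domination your proof would be strictly more informative than the paper's, but as written that step is a gap rather than a deferred computation.
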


\subsection{De-mystifying $2l$-level de-convolutions}
\label{sec:theory-higher-convolutions}

Let us assume that the coefficient $\boldsymbol{\alpha}=(\alpha_{k})_{k=0}^{\infty}$ defining graph kernel, encode also an analytical function $g:\mathbb{C} \rightarrow \mathbb{C}$ of the form: $g(x)=\sum_{i=0}^{\infty} \alpha_{k}x^{k}$. Assume furthermore that one can compute $h(x) = g^{\frac{1}{2l}}(x)$ and its Taylor expansion is of the form: $h(x) = \sum_{i=0}^{\infty} \beta_{i} x^{i}$. Then it is easy to see that function: $f(p)=\beta_{p}$ satisfies Equation \ref{eq:alpha-def}.

The above observation provides a straightforward algorithm for computing modulation function $f$ for GRFs++ with hyperparameter $l$: (1) map the graph kernel under consideration to function $g$, (2) compute its $(2l)^{th}$-root $h$, (3) find Taylor series of $h$ to define $f$.

\begin{remark}
We now see why the expression for
$f$ in the GRFs++ mechanism associated with the diffusion graph kernel is particularly simple for any 
$l \in \mathbb{N}_{+}$. This follows from the fact that the $l$-th root of the generating function 
$g(x)=\text{exp}(x)$ has the closed form $g^{1/l}(x)=\text{exp}(x/l)$.
\end{remark}

%\textcolor{red}{\footnotesize{IR: we could also give an %iterative formula here, like Eq 6 of general graph random %features}}

\begin{figure*}
    \centering
\includegraphics[width=\linewidth]{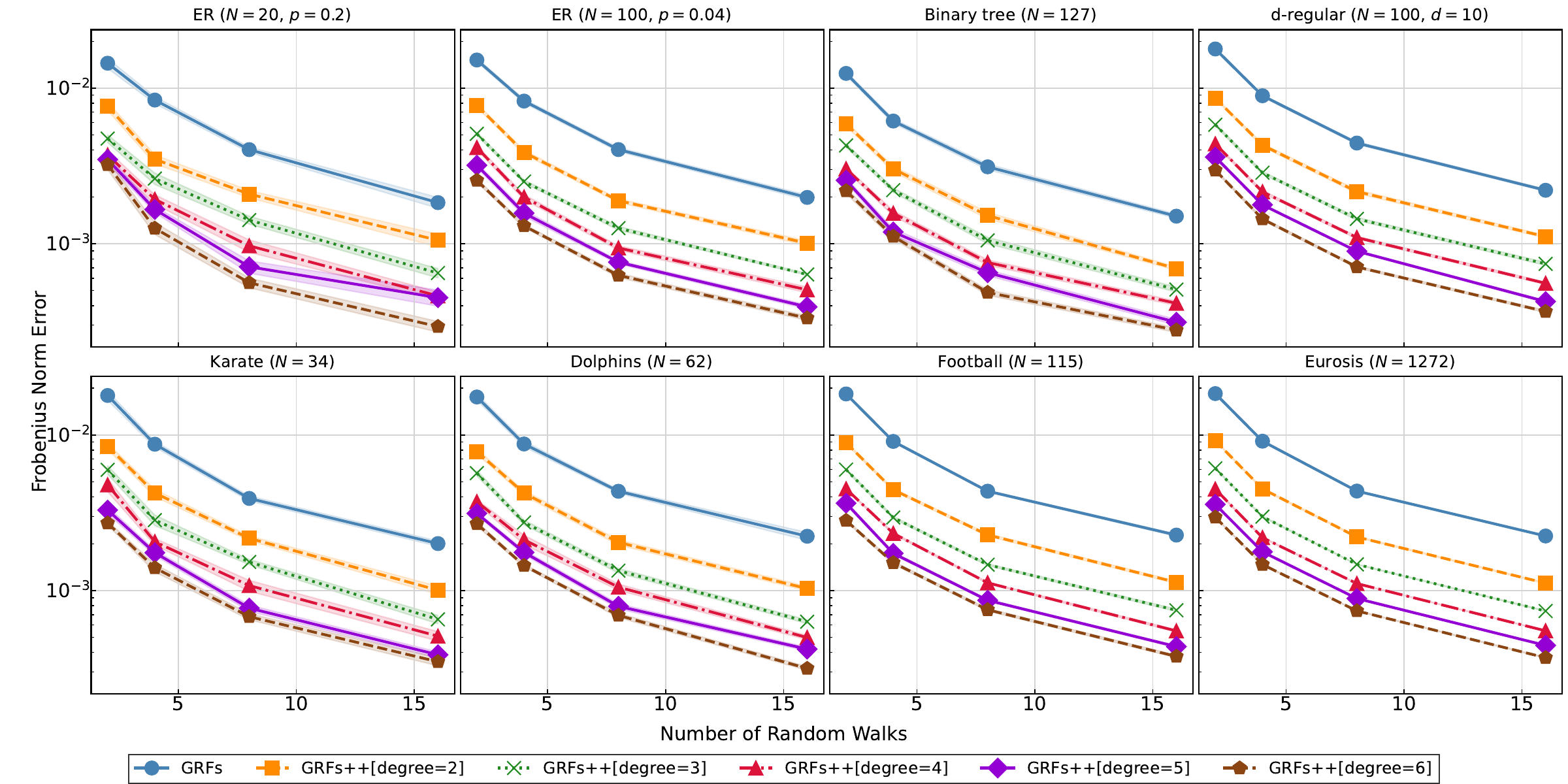}
    \caption{\small{Comparison of different GRF methods for the diffusion kernel estimation. The
approximation error (y-axis) improves with the number of walks $m$ (x-axis) and GRF++ provides a sharper estimate than the previous GRF mechanism. Experiments repeated $10$ times.}}
    \label{fig:estimation} % \vspace{-3mm}
\end{figure*}
% \vspace{-2mm}
\section{Experiments}
\label{sec:experiments}

In this section, we showcase the ability of GRFs++ to efficiently approximate graph node kernels (see Sec. \ref{sec:exp-general-est}), including with larger diameters. 
Furthermore, we show downstream applications of GRFs++ in graph classification, node clustering tasks and normal field prediction on meshes (see Sec \ref{sec:exp-downstream}). 
We use the graph diffusion kernel for all experiments in this section. Furthermore, we provide kernel estimation for other kernels including : (1) $(\mathbf{I}-\mathbf{W})^{-1}$, (2) $(\mathbf{I}-\mathbf{W}^{2})^{-1}$, (3) $\exp(\frac{\mathbf{W^{2}}}{4})$ in Sec.~\ref{sec:other_kernels} (Fig.~\ref{fig:estimation-2},~\ref{fig:estimation-3}, and~\ref{fig:estimation-4}). For all these kernels, GRF++ outperform GRF. 
% \vspace{-3mm}
\subsection{Accurate estimation of Graph Kernels with GRFs++}
\label{sec:exp-general-est}
Following~\citep{general_grfs}, we choose eight graphs of varying sizes: (1) Erdős-Rényi graphs of two sizes, (2) a binary tree, (3) a d-regular graph, and (4) four real world examples (karate, dolphins, football and eurosis). 
% We follow the same setup as~\citep{general_grfs}. 
Fig.~\ref{fig:estimation} plots the relative Frobenius norm error 
of the approximation $\widehat{\mathbf{K}}_{\boldsymbol{\alpha}}(\mathbf{W})$ of the groundtruth kernel matrix 
$\mathbf{K}_{\boldsymbol{\alpha}}(\mathbf{W})$
with GRFs++  (i.e., $\|\mathbf{K}_{\boldsymbol{\alpha}}(\mathbf{W}) - \widehat{\mathbf{K}}_{\boldsymbol{\alpha}}(\mathbf{W})
\|_{\mathrm{F}}/\|\mathbf{K}_{\boldsymbol{\alpha}}(\mathbf{W})\|_{\mathrm{F}} $) against the number of random walks $m$, showcasing improved estimation accuracy with GRFs++. 
Next, we show that our method can capture long-distance information more accurately than regular GRFs (Fig.~\ref{fig:long_estimation}). For this task, we select eight diverse graphs from datasets including Peptides~\citep{dwivedi2022LRGB}, CIFAR-10~\citep{dwivedi2023benchmarking}, Reddit-Binary~\citep{Morris+2020}, and Geometric Shapes~\citep{yannickS_geometric_shapes}. These graphs exhibit varying degrees of sparsity and heterophily, yet all are characterized by large diameters (up to 159). 
We again compute kernel estimation error (with $p_{\mathrm{halt}}=0.1$), but now for node pairs that are a specified distance apart.
Again, GRFs++ are more accurate.
See App.~\ref{sec:ld_graph_app} for details.

\begin{figure*}[h!] 
    \centering
\includegraphics[width=\linewidth]{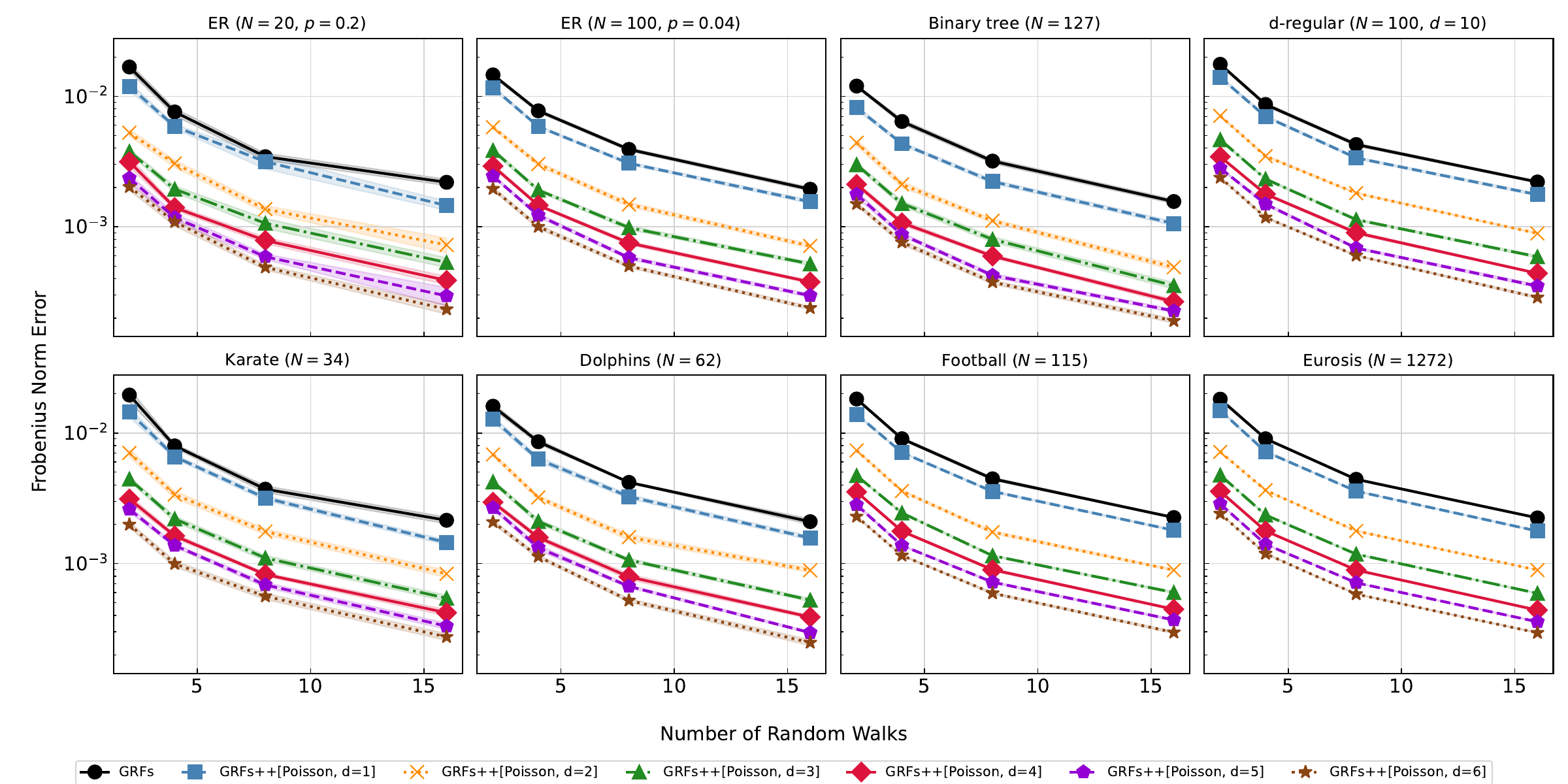}
    \caption{\small{Our novel halting policy based on Poisson distribution provides additional gains over the GRF mechanisms. We run the experiment $s=10$ times on different graphs of varying sizes.}}
    \label{fig:new_halt} 
    
\end{figure*}

% \begin{figure}[b!]
%     \centering
% \includegraphics[width=.85\textwidth]{img/same_path_long_graph.pdf}
% \vspace{-4mm}
%     \caption{Using the exact same walk as the baseline, repeated multiple times isolates the usefulness of our proposed stitching algorithm. }
%     \label{fig:same_path_long}
% \end{figure}

\begin{figure*}[th]
% \vspace{-2mm}
  % \vspace{-\intextsep} % Optional: Adjusts vertical spacing
  \centering
  \includegraphics[width=\linewidth]{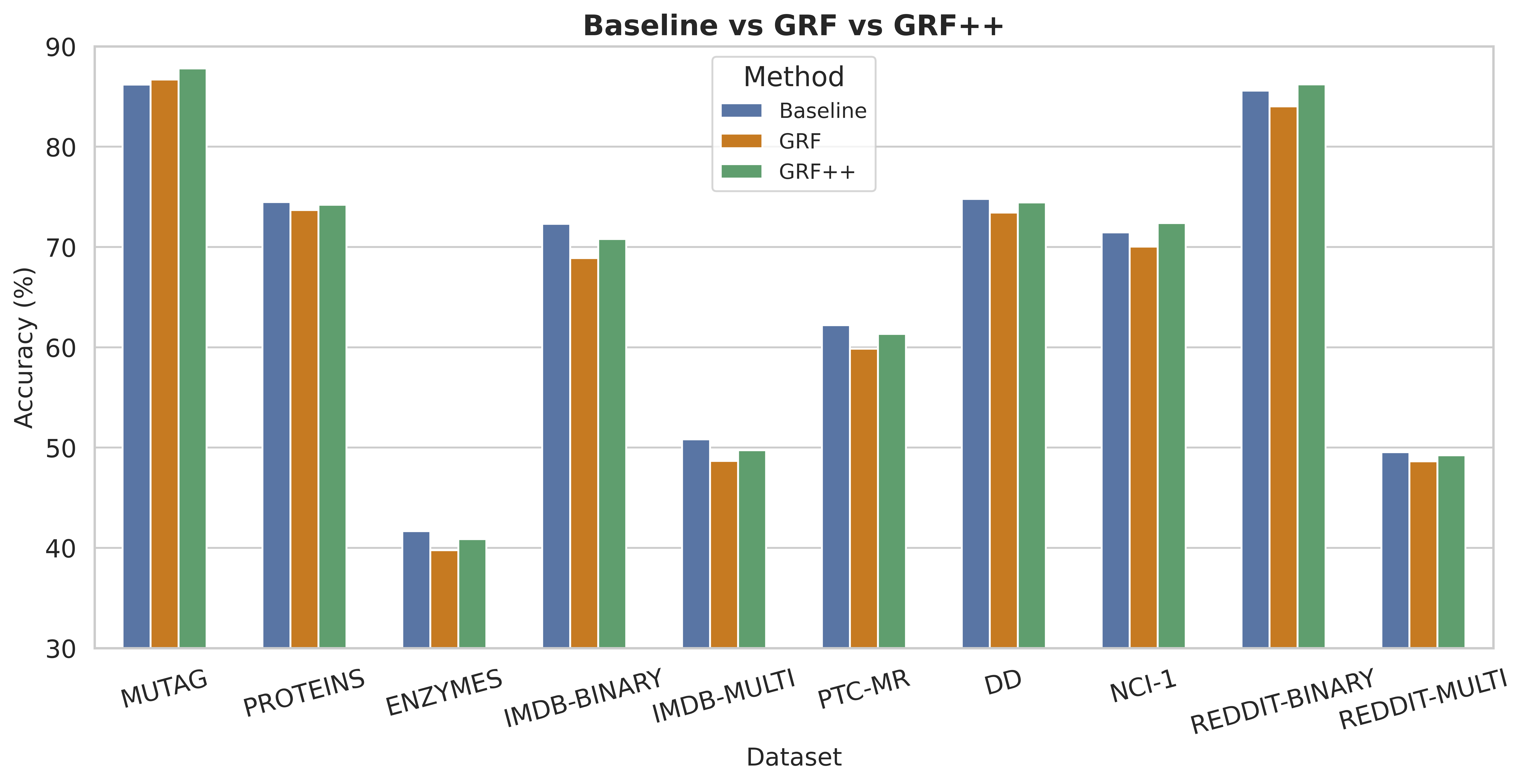}
  \caption{\small{Graph classification using the approximate diffusion kernel from GRF++. Our method performs at par with the baseline diffusion kernel and always beats GRFs on 12 different graph datasets.}}
  \label{fig:graph_classification}
  % \vspace{-\intextsep} % Optional: Adjusts vertical spacing
% \end{wrapfigure}
\end{figure*}
\paragraph{New Termination Strategy:} 
Next, we investigate the benefits a more general (non-Bernoulli) termination strategy, as described in Sec.~\ref{sec:beyond_bernoulli}. 
Specifically, we employ a halting probability governed by a \textbf{Poisson distribution} $\mathbf{P}$. For a fair comparison, the parameters are chosen so that the expected random walk length remains the same as in regular GRFs. Fig.~\ref{fig:new_halt} shows that our novel halting strategy improves regular GRF mechanism on a wide range of diverse graphs. We also get a more accurate estimation of kernel values for distant nodes in large diameter graphs (see Fig.~\ref{fig:new_halt_long}).

\begin{figure*}
    \centering
    \includegraphics[width=\linewidth]{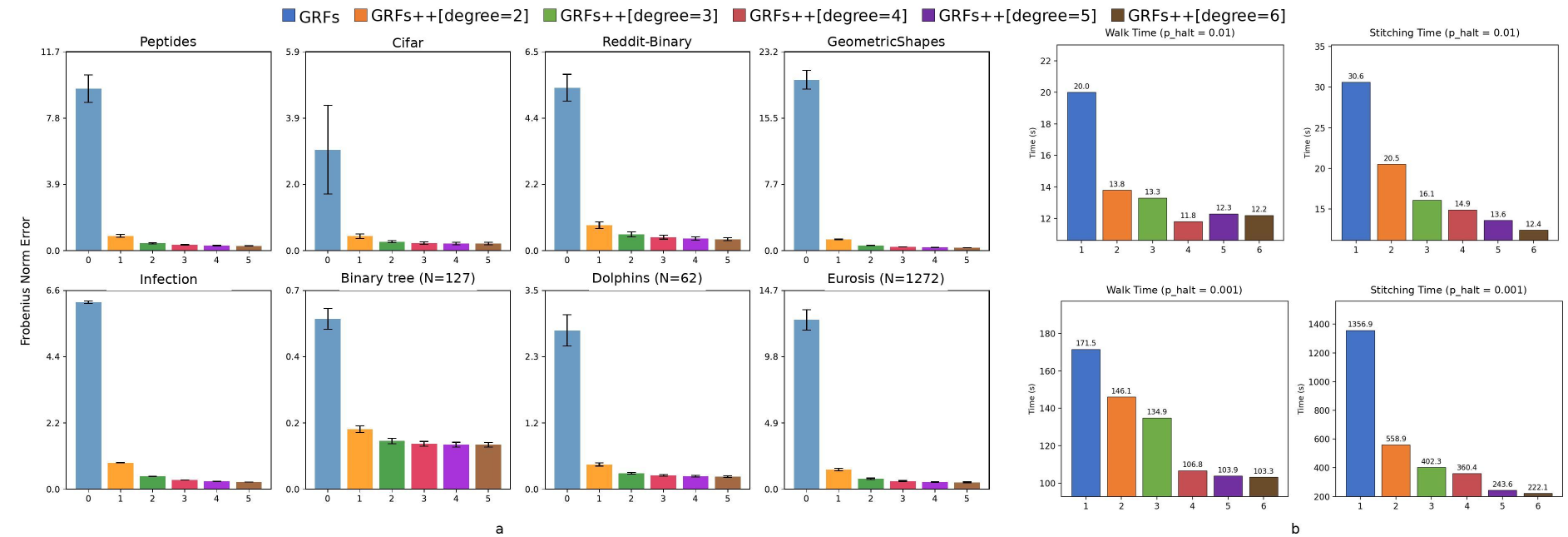}
    \caption{\small{a) Using the exact same walk as the baseline GRF, repeated multiple times, pinpoints the effectiveness of the walk-stitching algorithm, showing additional computational gains. b) Speed comparison for various GRF-methods: regular GRFs and GRFs++ with different degrees. }}
    \label{fig:speed_reuse}
\end{figure*}

% % \begin{wrapfigure}{R}{0.65\textwidth}
% \begin{figure}[h]
% % \vspace{-3mm}
%   \centering % Centers the image inside the wrapfigure box
%   \includegraphics[width=\linewidth]{img/same_path_long_graph.pdf}
% \vspace{-4mm}
%     \caption{\small{Using the exact same walk as the baseline GRF, repeated multiple times, pinpoints the effectiveness of the walk-stitching algorithm, showing additional computational gains. }}
%     \label{fig:same_path_long}\vspace{-3mm}
% % \end{wrapfigure}
% \end{figure}

\paragraph{Re-using the same set of random walks :} Finally, we conduct an ablation study (Fig.~\ref{fig:speed_reuse} \textbf{(a)}) to pinpoint the benefit of the walk-stitching mechanism itself. In this experiment, rather than sampling new independent walks for each component (as before), we use \textbf{the exact same set} of random walks generated for the baseline GRF method and re-use them for each degree of the GRFs++ estimator.
This results in a significant improvement in the Frobenius norm error for all GRFs++ variants, compared to the regular GRFs baseline (see Tab.~\ref{tab:metrics_vertec_normal_all} for a practical downstream experiment). 
% This demonstrates that the walk-stitching technique provides a significant improvement in approximation accuracy on its own, even without being fed new random walk data. This result supports our earlier claim that re-using the same set of random walks can be an effective practical strategy.

% \begin{wrapfigure}{r}{0.65\textwidth} % {r} = right, {l} = left 
% \begin{figure}[h]
% \vspace{-3mm}
%     \centering
%     \includegraphics[width=\linewidth]{img/walk_and_stitching_time_row.png}
%     \vspace{-4mm} % Adjusted vspace slightly
%     \caption{Speed comparison for various GRF-methods: regular GRFs and GRFs++ with different degrees.}
%     \label{fig:time}\vspace{-3mm}
% % \end{wrapfigure}
% \end{figure}
\paragraph{Computational Time :} We generated random graphs with 500 nodes. We evaluated two configurations using base halting probabilities ($p_{\mathrm{halt}}$) of $0.01$ and $0.001$, which correspond to the regular GRF method (degree $l=1$). For the GRFs++ methods of degree $l>1$, we used a scaled halting probability of $p_{\mathrm{halt}} \times l$ to ensure a fair comparison.
Fig. \ref{fig:speed_reuse} \textbf{(b)} presents a computational speed analysis of the GRFs++ algorithm, breaking down its performance into two key components: ``Walk Time'' (the time required for random walk sampling) and ``Stitching Time'' (the time for the matrix-matrix operations used in the walk-stitching technique). As the degree increases, both the walk time and the stitching time decreases.

We further compare the speed of GRF++ against the brute force kernel construction. Tab.~\ref{tab:mesh_runtime_new} shows scalabilty of GRF++ for larger graphs (results for smaller meshes can be found in Fig.~\ref{fig:time_grf_plus}).

\begin{table}[t]
\centering
\caption{\textbf{Runtime comparison of brute-force (BF) computation and GRF++ on meshes of increasing size.} 
GRF++ consistently achieves substantial speedups and remains tractable even when the brute-force baseline runs out of time (OOT). \textbf{OOT}: execution exceeded the allotted runtime budget.}
\label{tab:mesh_runtime_new}
\vspace{0.3em}

\renewcommand{\arraystretch}{1.25}
\setlength{\tabcolsep}{8pt}

\begin{tabular}{@{}rcc@{}}
\toprule
\textbf{Mesh Size} & \textbf{BF Runtime (s)} & \textbf{GRF++ Runtime (s)} \\
\midrule
11,700 & 1,661.74 & \textbf{314.98} \\
15,527 & 4,833.51 & \textbf{727.48} \\
24,194 & 14,062.08 & \textbf{1,724.71} \\
35,300 & \textsc{OOT} & \textbf{5,591.42} \\
\bottomrule
\end{tabular}
\end{table}

% \begin{figure}[h!]
%     \centering
% \includegraphics[width=.85\textwidth]{img/walk_and_stitching_time_row.png}
% \vspace{-2mm}
%     \caption{Walk time and stiching time }
%     \label{fig:time}
% \end{figure}

% \begin{figure}[h!]
%     \centering
% \includegraphics[width=.7\textwidth]{img/baseline_vs_grf++.png}
%     \caption{\small{Graph classification using the approximate diffusion kernel from GRF++. Our method performs at par with the baseline diffusion kernel.}}
% \label{fig:graph_classification}
% \end{figure}

\subsection{Downstream Tasks}
\label{sec:exp-downstream}
 We show the efficacy of our GRFs++ based approximate kernel in various downstream tasks: image classification, graph classification, node clustering and vertex normal prediction. 
% \newline

% \begin{wrapfigure}{r}{0.60\textwidth}
\begin{table*}[!t]
\centering
\small{\caption{\small{Cosine Similarity results for Meshes. GRF++ matches the baseline kernel (BF) and outperforms GRFs. GRF++r, reusing the same random walk, also outperforms GRF.}}\label{tab:metrics_node}
}
\scriptsize
\begin{tabular}{lrrrrrrrrrrr}
\toprule
\midrule
MESH SIZE  & 5985 & 6577 & 6911 & 7386 & 7953 & 8011 & 8261 & 8449 & 8800 & 9603\\
\midrule
BF   & 0.9194 & 0.9622 & 0.9769 & 0.9437 & 0.9460 & 0.9382 & 0.9196 & 0.9276 & 0.9836 & 0.9766\\
GRF   & 0.9091 & 0.9525 & 0.9682 & 0.9308 & 0.9383 & 0.9233 & 0.9050 & 0.9139 & 0.9778 & 0.9708\\
GRF++   & 0.9154 & 0.9599 & 0.9751 & 0.9374 & 0.9429 & 0.9321 & 0.9145 & 0.9205 & 0.9820 & 0.9748 \\
GRF++r & 0.9129 & 0.9561 & 0.9701 & 0.9348 & 0.9410 & 0.9269 & 0.9095 & 0.9160 & 0.9805 & 0.9734 \\
\midrule
Diff  &	\textcolor{darkgreen}{0.0063} &	\textcolor{darkgreen}{0.0074} &	\textcolor{darkgreen}{0.0069} &	\textcolor{darkgreen}{0.0066} &	\textcolor{darkgreen}{0.0046} &	\textcolor{darkgreen}{0.0088} &	\textcolor{darkgreen}{0.0095} &	\textcolor{darkgreen}{0.0066} &	\textcolor{darkgreen}{0.0042} &	\textcolor{darkgreen}{0.0040} \\
\bottomrule
\end{tabular}
% \vspace{-3mm}
\end{table*}

\paragraph{Image Classification :} 
In this experiment, we investigate the utility of GRFs++ for introducing inductive biases into the self-attention mechanism of Vision Transformers (ViTs)~\citep{dosovitskiy2021an}. We treat the input image patches as nodes in a regular 2D grid graph (lattice), where edges connect spatially adjacent patches. We employ GRFs++ (with degree $l=2$) to efficiently approximate the diffusion kernel matrix $\mathbf{K}_{\text{diff}}$ on this grid graph. This approximate kernel is subsequently normalized and utilized as a soft mask $\mathbf{M}$, which is added to the standard attention:
% \begin{small}
\begin{equation*}
    \text{Attention}(\mathbf{Q}, \mathbf{K}, \mathbf{V}) = \text{softmax}\left(\frac{\mathbf{QK}^\top}{\sqrt{d}} + \lambda \mathbf{M}\right)\mathbf{V}
\end{equation*}
% \end{small}
We consistently get gains over the baseline ViT on ImageNet~\citep{imagenet} and Places365~\citep{zhou2017places} (Tab.~\ref{tab:vit_swing_results}) with additional details in App.~\ref{sec:vit}.

% We choose $\lambda = 0.5$ in our setup. Our empirical results on ImageNet~\citep{imagenet} show that the ViT-B-16 model with GRF++ masking achieves a top-1 accuracy of $\mathbf{80.71}\%$, surpassing the baseline ViT-B-16 performance of $80.16\%$.

\begin{table}[h]
    \centering
    \caption{Top-1 accuracy on ImageNet and Places365 validation sets. We compare the standard Vision Transformer (ViT) against our GRF++ variant. GRF++ consistently outperforms the baseline across both datasets.}
    \label{tab:vit_swing_results}
    \begin{tabular}{lcc}
        \toprule
        \textbf{Dataset} & ViT & GRF++ \\
        \midrule
        ImageNet & 80.10  & \textbf{80.31} \\
        Places365 & 55.78 &  \textbf{56.03}\\
        \bottomrule
    \end{tabular}
\end{table}

\paragraph{Graph Classification :} Graph kernels have been widely used for graph classification tasks~\citep{Kriege_2020, Nikolentzos_2021}. We compare the graph classification results
obtained using the approximate kernel from GRF++ with those from the exact diffusion kernel on a wide variety of datasets~\citep{Morris+2020}.
% bioinformatics and social networks like MUTAG, IMDB, among others. Following~\citep{delara2018simple} we construct a graph feature for both kernels by using the
% smallest $k$ eigenvalues (where $k$ is a hyperparameter). This feature set is then used for classification, using a random forest classifier. 
Fig.~\ref{fig:graph_classification} shows that our method performs on par with the diffusion kernel and outperforms regular GRF (additional details in App.~\ref{sec:graph_class_app}). In Tables~\ref{tab:results_labeled} and~\ref{tab:results_unlabeled}, we report the results for a wide range of baselines. We observe that
GRF++ achieves competitive performance among various strong kernel-based classification baseline
approaches. Note that GRF++ results are not directly comparable with other approaches, as GRF++
constructs an intra-graph kernel while other methods use inter-graph kernels. Despite the aforementioned considerations, we contend that positioning our results within the broader framework of
alternative methodologies demonstrates that GRF++ remains a compelling approach, owing to its speed
and comparable classification accuracy. 

% \begin{wraptable}{r}{0.60\textwidth} % {r} = right align, {0.5\textwidth} = width
\begin{table}[h]
  \centering 
  % \vspace{-4mm}
  \caption{\small{Node Clustering: GRFs++ vs regular GRFs. GRF++ outperforms GRF on all the datasets.}} 
  % \vspace{-3mm}
  \label{tab:node_clustering}
  \scriptsize
  \begin{tabular}{lcccr}
  \toprule
  Name &	\# Nodes &	\# clusters &	GRF &	GRF++[d=2] \\
  \midrule
Karate &	34 &	2 &	0.2995 &	\textbf{0.2585} \\
Dolphins &	62 &	2 &	0.0635 &	\textbf{0.0323} \\
Polbooks &	105 &	3 &	0.1060 &	\textbf{0.1033} \\
Football &	115 &	12 &	0.0731 &	\textbf{0.0362} \\
Databases &	1006 &	6 &	0.3528 &	\textbf{0.3001} \\
Eurosis &	1272 &	13 &	0.2248 &	\textbf{0.1304} \\
\bottomrule
  \end{tabular}
  % \vspace{-4mm}
\end{table}

\paragraph{Node Clustering :} We also test {GRFs++}'s utility on the downstream task of node clustering. For this experiment, we perform spectral clustering, implemented using the SciPy library, to group nodes based on the approximated diffusion kernel. We compare the clustering error $E = (\#\ of\ wrong\ pairs)/(N * (N-1))  $ of the baseline {GRF} with our {GRF++[degree=2]} method. The results, presented in Tab. \ref{tab:node_clustering} , show that GRF++ achieves a lower error rate \textbf{across all tested datasets} (additional details in App. ~\ref{apen_expt_nc}). Furthermore, we compare our method with additional clustering methods like \textsc{Louvain}, \textsc{Spectral}, \textsc{KCenters} and \textsc{Propagation} in Table~\ref{table:node_cluster_extend}.

\paragraph{Normal Prediction :}~\label{vert_norm} 
% Finaly, we compare the efficiency of GRF++ with baselines on the normal vector prediction task, i.e. interpolation on meshes. 
We test GRF++ on normal vector prediction (mesh interpolation).
Every vertex of the graph mesh $\mathrm{G}$ with a vertex-set V, is associated with spatial coordinates
$\mathbf{x_i} \in \mathbb{R}^3$
and a unit normal vector $\mathbf{F}_i \in \mathbb{R}^3$. Following~\citep{choromanski2024fast}, we randomly sample a subset $V' \subset V$ from each mesh with
$|V'
| = 0.8|V|$ and mask out their vertex normals. Our goal is to
predict the vertex normals of each masked vertex $i \in V'$ via: $F_i = \sum_{j \in V \setminus V'} \mathrm{K}(i, j)F_j,$ where
$\mathrm{K}$ is the diffusion kernel. We report the cosine similarity between predicted and groundtruth vertex normals, averaged over all the nodes. We validate GRF++ over \textbf{40} meshes of 3D printed objects of varying sizes from the Thingi10K dataset~\citep{Thingi10K}. Additional details are provided in App.~\ref{sec:vert_norm_app}.  To save space, we show all the results in Tab.~\ref{tab:metrics_vertec_normal_all} in App~\ref{sec:vert_norm_app} presenting a few larger meshes in Table ~\ref{tab:metrics_node}. GRFs++ provide consistent gains, compared to regular GRFs. Moreover we show that GRF++ provides better scaling than the baseline kernel~(Fig.~\ref{fig:time_grf_plus}).

\paragraph{GRF++ in Graph Transformers :} 
To further validate our approach, we evaluate GRF++ within the GraphGPS framework~\cite{rampavsek2022recipe}, using a Performer backbone without any default positional encoding. As shown in Tab.~\ref{tab:graph_gps}, integrating GRF++ consistently improves upon this baseline across established benchmarks~\citep{ dwivedi2022LRGB, dwivedi2023benchmarking}. These results confirm that GRF++ generates highly effective positional node representations, demonstrating that the utility of our method extends well beyond kernel approximation. Additional details are presented in App.~\ref{sec:gps_app}.

\begin{table}[t]
\centering
\small
\setlength{\tabcolsep}{9pt}
\renewcommand{\arraystretch}{1.15}
\caption{Performance comparison between Performer and our GRF-based variants across benchmark datasets. Lower is better for Peptides-func.}
\begin{tabular}{lccc}
\toprule
\textbf{Dataset} & \textbf{Performer} & \textbf{+ GRF} & \textbf{+ GRF++} \\
\midrule
CIFAR-10                & 69.96 & 70.10 & \textbf{70.21} \\
PATTERN                 & 84.91 & 85.20 & \textbf{85.39} \\
CLUSTER                 & 75.75 & 76.08 & \textbf{76.28} \\
MNIST                   & 97.02 & 97.14 & \textbf{97.38} \\
Peptides-func $\downarrow$ & 27.84 & 26.88 & \textbf{26.61} \\
Peptides-struct         & 62.93 & 64.17 & \textbf{64.33} \\
\bottomrule
\end{tabular}
\label{tab:graph_gps}
\end{table}
%57084	9603	417.6497	0.976689	128.9212	0.9708142	210.30119	0.974756967	204.574122	0.97339508																	

% \vspace{-3mm}
\section{Conclusion}
\label{sec:conclusion}
% \vspace{-2mm}
We introduced \textit{refined GRFs} (GRFs++) for improved approximation of graph kernels. 
GRFs++ address regular GRFs' shortcomings, modeling the relationship between distant pairs of nodes more effectively and introducing novel random walk termination strategies. 
GRFs++ provide more accurate and more efficient kernel approximation, replacing computationally-inefficient and inherently sequential sampling of long random walks with matrix-matrix operations. 
We complement our algorithm with theoretical analysis, showing that GRFs++ give unbiased approximation.
We provide concentration results, as well as detailed empirical evaluation on a wide variety of graph datasets and tasks.

\section*{Impact Statement}
This paper presents work whose goal is to advance the field of Machine
Learning. There are many potential societal consequences of our work, none
which we feel must be specifically highlighted here.
\bibliography{iclr2026_conference}
\bibliographystyle{icml2026}

% commented out Appendix
\clearpage
\appendix
\onecolumn
\section{APPENDIX}
\label{sec:appendix}

\subsection{Proof of Lemma \ref{lemma:higher-conv-gen}}
\label{sec:main_proof}
\begin{proof}
By using similar analysis, as in the proof of Theorem 2.1 from 
\citep{general_grfs}, we obtain:
\begin{equation}
\mathbb{E}[\mathbf{K}^{(i)}_{1}\mathbf{K}^{(i)}_{2}(a,b)]= 
\sum_{v}\sum_{p=0}^{\infty}\sum_{t=0}^{\infty}\mathbf{W}^{p}(a,v)\mathbf{W}^{t}(v,b)f(p)f(t) \mathbb{P}[X \geq p] \mathbb{P}[Y \geq t]\frac{1}{\tau(p)}\frac{1}{\tau(t)},
\end{equation}
where $X,Y \overset{\mathrm{iid}}{\sim} \mathbf{P}$.
Thus, form the fact that pairs $(\mathbf{K}^{(i)}_{1},\mathbf{K}^{(i)}_{2})$ are constructed independently for different $i=1,...,l$, we obtain:
\begin{align}
\begin{split}
\label{eq:long-product}
\mathbb{E}\left[\prod_{i=1}^{l} \mathbf{K}^{(i)}_{1}\mathbf{K}^{(i)}(a,b)\right]
= \sum_{v_{1},v_{2},...v_{2l-1}}\sum_{p^{(1)}}^{\infty}...\sum_{p^{(l)}}^{\infty}
\sum_{t^{(1)}}^{\infty}...\sum_{t^{(l)}}^{\infty}  \mathbf{W}^{p^{(1)}}(a,v_{1})\mathbf{W}^{t^{(1)}}(v_{1},v_{2})...\\
\mathbf{W}^{p^{(l)}}(v_{2l-2},v_{2l-1})\mathbf{W}^{t^{(l)}}(v_{2l-1},b)
f(p^{(1)})f(t^{(1)})...f(p^{(l)})f(t^{(l)})
\end{split}
\end{align}
Therefore, we obtain:
\begin{equation}
\mathbb{E}\left[\prod_{i=1}^{l} \mathbf{K}^{(i)}_{1}\mathbf{K}^{(i)}\right]
=\sum_{i=0}^{\infty} \left[\sum_{p^{(1)}+t^{(1)}+...+p^{(l)}+t^{(l)}=i}
f(p^{(1)})f(t^{(1)})...f(p^{(l)})f(t^{(l)})\right] \mathbf{W}^{i}
\end{equation}
That completes the proof, because of Equation \ref{eq:alpha-def}.
\end{proof}

\subsection{Proof of Lemma \ref{lemma:mse}}
\label{sec:appendix_mse}

We now provide the proof of Lemma \ref{lemma:mse}, that we re-state here for reader's convenience:

\begin{lemma}[MSE of the GRFs++ based graph estimator with GRFs++ degree $l=2$]
The $\mathrm{MSE}$ of the estimator $\widehat{\mathbf{K}}_{\boldsymbol{\alpha}}(\mathbf{W})$ of the groundtruth graph kernel matrix $\mathbf{K}_{\boldsymbol{\alpha}}(\mathbf{W})$, leveraging GRFs++ with degree $l=2$ satisfies (proof in the Appendix):
\begin{equation}
\mathrm{MSE}(\widehat{\mathbf{K}}_{\boldsymbol{\alpha}}(\mathbf{W})) \overset{\mathrm{def}}{=} \mathbb{E}[\|\mathbf{X}_{1}\mathbf{X}_{2} - \mathbf{K}_{\boldsymbol{\alpha}}(\mathbf{W})\|^{2}_{\mathrm{F}}]=
\|\mathbb{E}[\mathbf{X}_{1}^{\top}\mathbf{X}_{1}]\|^{2}_{\mathrm{F}} - \|\mathbf{K}_{\boldsymbol{\alpha}}(\mathbf{W})\|^{2}_{\mathrm{F}}
\end{equation}
\end{lemma}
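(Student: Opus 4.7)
The plan is to reduce everything to the unbiasedness of Lemma~\ref{lemma:higher-conv} plus two linear‑algebra facts (cyclicity of the trace, and $\mathrm{Tr}(\mathbf{M}^2)=\|\mathbf{M}\|_F^2$ for symmetric $\mathbf{M}$), with the $(\mathbf{K}_1^{(i)},\mathbf{K}_2^{(i)})$ exchangeability supplying the missing symmetry. First, I would expand the squared Frobenius norm and invoke unbiasedness. Writing $\mathbf{K}:=\mathbf{K}_{\boldsymbol{\alpha}}(\mathbf{W})$ and noting $\mathbb{E}[\mathbf{X}_1\mathbf{X}_2]=\mathbf{K}$ by Lemma~\ref{lemma:higher-conv} with $l=2$, the cross term in
$$\mathbb{E}[\|\mathbf{X}_1\mathbf{X}_2-\mathbf{K}\|_F^2] = \mathbb{E}[\|\mathbf{X}_1\mathbf{X}_2\|_F^2] - 2\,\langle\mathbb{E}[\mathbf{X}_1\mathbf{X}_2],\mathbf{K}\rangle_F + \|\mathbf{K}\|_F^2$$
collapses to $2\|\mathbf{K}\|_F^2$, so the MSE reduces to $\mathbb{E}[\|\mathbf{X}_1\mathbf{X}_2\|_F^2]-\|\mathbf{K}\|_F^2$, and the whole game becomes evaluating $\mathbb{E}[\|\mathbf{X}_1\mathbf{X}_2\|_F^2]$.

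Second, I would use the cyclic property of the trace to split the product into an "$\mathbf{X}_1$-only" and "$\mathbf{X}_2$-only" piece, and then push the expectation inside via independence:
$$\|\mathbf{X}_1\mathbf{X}_2\|_F^2 = \mathrm{Tr}\big((\mathbf{X}_1\mathbf{X}_2)^\top \mathbf{X}_1\mathbf{X}_2\big) = \mathrm{Tr}\big(\mathbf{X}_1^\top\mathbf{X}_1\cdot\mathbf{X}_2\mathbf{X}_2^\top\big),$$
so that $\mathbb{E}[\|\mathbf{X}_1\mathbf{X}_2\|_F^2] = \mathrm{Tr}\big(\mathbb{E}[\mathbf{X}_1^\top\mathbf{X}_1]\,\mathbb{E}[\mathbf{X}_2\mathbf{X}_2^\top]\big)$.

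Third — the step where the specific GRFs++ construction matters — I would exploit two built‑in symmetries. Because $\mathbf{X}_1,\mathbf{X}_2$ are i.i.d., $\mathbb{E}[\mathbf{X}_1^\top\mathbf{X}_1]=\mathbb{E}[\mathbf{X}_2^\top\mathbf{X}_2]$ and $\mathbb{E}[\mathbf{X}_2\mathbf{X}_2^\top]=\mathbb{E}[\mathbf{X}_1\mathbf{X}_1^\top]$. Inside each block, $\mathbf{X}_i = \mathbf{K}_1^{(i)}(\mathbf{K}_2^{(i)})^\top$ with $\mathbf{K}_1^{(i)},\mathbf{K}_2^{(i)}$ i.i.d., so the joint law is invariant under swapping them; this forces $\mathbf{X}_i\stackrel{d}{=}\mathbf{X}_i^\top$ and therefore $\mathbb{E}[\mathbf{X}_i^\top\mathbf{X}_i]=\mathbb{E}[\mathbf{X}_i\mathbf{X}_i^\top]$. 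Denoting this common symmetric (PSD) matrix by $\mathbf{M}$ — the object the lemma writes $\mathbb{E}[\mathbf{X}_1^2]$ — both factors in the trace equal $\mathbf{M}$, leaving $\mathrm{Tr}(\mathbf{M}\cdot\mathbf{M})$.

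Finally, for symmetric $\mathbf{M}$ one has $\mathrm{Tr}(\mathbf{M}^2)=\sum_i\lambda_i(\mathbf{M})^2=\|\mathbf{M}\|_F^2$, which closes the argument: $\mathbb{E}[\|\mathbf{X}_1\mathbf{X}_2\|_F^2]=\|\mathbb{E}[\mathbf{X}_1^2]\|_F^2$, and subtracting $\|\mathbf{K}\|_F^2$ recovers the lemma. The only delicate step is the third one: for a generic product $\mathbf{X}_1=\mathbf{A}\mathbf{B}^\top$ of two independent matrices the identity $\mathbb{E}[\mathbf{X}_1^\top\mathbf{X}_1]=\mathbb{E}[\mathbf{X}_1\mathbf{X}_1^\top]$ does not hold, and it is precisely the $(\mathbf{K}_1,\mathbf{K}_2)$‑exchange symmetry — automatic because the two blocks of random walks are drawn i.i.d.\ from the same procedure — that produces it and lets the compact formula emerge.
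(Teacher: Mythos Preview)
Your argument is correct and matches the paper's proof almost step for step: expand the Frobenius norm, use unbiasedness $\mathbb{E}[\mathbf{X}_1\mathbf{X}_2]=\mathbf{K}_{\boldsymbol{\alpha}}(\mathbf{W})$, rewrite $\|\mathbf{X}_1\mathbf{X}_2\|_{\mathrm{F}}^2$ as a trace, apply cyclicity and independence, and finish via the symmetry of the $\mathbf{X}_i$. The only difference is at that last step: the paper asserts each $\mathbf{X}_i$ is \emph{almost-surely} symmetric (its $(a,b)$-entry being a dot product $\phi_f(a)^\top\phi_f(b)$) and directly replaces $\mathbf{X}_i^\top$ by $\mathbf{X}_i$ inside the trace, whereas you use the \emph{distributional} symmetry $\mathbf{X}_i\stackrel{d}{=}\mathbf{X}_i^\top$ coming from $(\mathbf{K}_1^{(i)},\mathbf{K}_2^{(i)})$-exchangeability to equate $\mathbb{E}[\mathbf{X}_i^\top\mathbf{X}_i]=\mathbb{E}[\mathbf{X}_i\mathbf{X}_i^\top]$ --- both routes land on $\|\mathbb{E}[\mathbf{X}_1^2]\|_{\mathrm{F}}^2$.
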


\begin{proof}
We have the following:
\begin{align}
\begin{split}
\mathbb{E}[\|\mathbf{X}_{1}\mathbf{X}_{2} - \mathbf{K}_{\boldsymbol{\alpha}}(\mathbf{W})\|^{2}_{\mathrm{F}}] = \mathbb{E}[
\|\mathbf{X}_{1}\mathbf{X}_{2} - \mathbb{E}[\mathbf{X}_{1}\mathbf{X}_{2}]\|^{2}_{\mathrm{F}}]=
\mathbb{E}[\|\mathbf{X}_{1}\mathbf{X}_{2}\|^{2}_{\mathrm{F}}]
-\|\mathbb{E}[\mathbf{X}_{1}\mathbf{X}_{2}]\|^{2}_{\mathrm{F}} = \\
\mathbb{E}[\mathrm{tr}((\mathbf{X}_{1}\mathbf{X}_{2})(\mathbf{X}_{1}\mathbf{X}_{2})^{\top})] - \|\mathbb{E}[\mathbf{X}_{1}\mathbf{X}_{2}]\|^{2}_{\mathrm{F}} = 
\mathbb{E}[\mathrm{tr}(\mathbf{X}_{1}\mathbf{X}_{2}\mathbf{X}_{2}^{\top}\mathbf{X}_{1}^{\top})]
- \|\mathbb{E}[\mathbf{X}_{1}\mathbf{X}_{2}]\|^{2}_{\mathrm{F}} = \\
\mathbb{E}[\mathrm{tr}(\mathbf{X}_{1}^{\top}\mathbf{X}_{1}\mathbf{X}_{2}\mathbf{X}_{2}^{\top})]
- \|\mathbb{E}[\mathbf{X}_{1}\mathbf{X}_{2}]\|^{2}_{\mathrm{F}}
 = \mathrm{tr}(\mathbb{E}[\mathbf{X}^{\top}_{1}\mathbf{X}_{1}\mathbf{X}_{2}\mathbf{X}^{\top}_{2}]) - 
 \|\mathbb{E}[\mathbf{X}_{1}\mathbf{X}_{2}]\|^{2}_{\mathrm{F}} = \\
=
\mathrm{tr}(\mathbb{E}[\mathbf{X}_{1}^{\top}\mathbf{X}_{1}]\mathbb{E}[\mathbf{X}_{2}\mathbf{X}_{2}^{\top}]) - 
 \|\mathbb{E}[\mathbf{X}_{1}\mathbf{X}_{2}]\|^{2}_{\mathrm{F}} = 
\|\mathbb{E}[\mathbf{X}_{1}^{\top}\mathbf{X}_{1}]\|^{2}_{\mathrm{F}} - 
\|\mathbf{K}_{\boldsymbol{\alpha}}(\mathbf{W})\|^{2}_{\mathrm{F}}
\end{split}    
\end{align}
In the series of equalities above, we applied several facts:
\begin{enumerate}
    \item unbiasedness of the estimator: $\mathbb{E}[\mathbf{X}_{1}\mathbf{X}_{2}] = \mathbf{K}_{\boldsymbol{\alpha}}(\mathbf{W})$,
    \item standard formula for the scalar variance: $\mathbb{E}[(Z-\mathbb{E}[Z])^{2}] = \mathbb{E}[Z^{2}]-(\mathbb{E}[Z])^{2}$, lifted to the matrix space via Frobenius norm,
    \item the following formula: $\|\mathbf{Z}\|^{2}_{\mathrm{F}}=\mathrm{tr}(\mathbf{Z}\mathbf{Z}^{\top})$, where $\mathrm{tr}$ denotes \textit{trace} of the input matrix,
    \item cyclic property of the trace: $\mathrm{tr}(\mathbf{A}\mathbf{B}\mathbf{C}\mathbf{D}) = \mathrm{tr}(\mathbf{BCDA})$,
    \item the symmetry of $\mathbf{X}_{1}^{\top}\mathbf{X}_{1}$ and $\mathbf{X}_{2}\mathbf{X}_{2}^{\top}$,
    \item the equality: $\mathbb{E}[\mathbf{X}_{1}^{\top}\mathbf{X}_{1}] = \mathbb{E}[\mathbf{X}_{2}\mathbf{X}_{2}^{\top}]$ that comes from the definition of $\mathbf{X}_{1}$ and $\mathbf{X}_{2}$,
    \item independence of $\mathbf{X}_{1}$ and $\mathbf{X}_{2}$.
\end{enumerate}
\end{proof}

\subsection{Proof of Theorem \ref{thm:concentration}}
\label{appendix-conc}
We will now provide a proof of Theorem \ref{thm:concentration}.
\begin{proof}
Without loss of generality, we will assume that $m=1$.
Take two vertices: $i$ and $j$ of a fixed graph $\mathrm{G}$. We will consider two GRFs++ estimators of the value $\mathbf{K}_{\boldsymbol{\alpha}}(\mathbf{W})[i,j]$ of the graph kernel between them:   
$\widehat{\mathbf{K}}^{(2^{t})}_{\boldsymbol{\alpha}}(\mathbf{W})[i,j]$ and 
$\widehat{\mathbf{K}}^{(2^{t+1})}_{\boldsymbol{\alpha}}(\mathbf{W})[i,j]$,
applying GRFs++ mechanism with degree $2^{t}$ and $2^{t+1}$ respectively (for $t \geq 0$).
Note that since both estimators are unbiased, it only suffices to prove the following:
\begin{equation}
\mathbb{E}[(\widehat{\mathbf{K}}^{(2^{t+1})}_{\boldsymbol{\alpha}}(\mathbf{W})[i,j])^{2}] \leq \mathbb{E}[(\widehat{\mathbf{K}}^{(2^{t})}_{\boldsymbol{\alpha}}(\mathbf{W})[i,j])^{2}]    
\end{equation}
Note that estimator $\widehat{\mathbf{K}}^{(2^{t})}_{\boldsymbol{\alpha}}(\mathbf{W})[i,j]$ can be re-written as:
\begin{equation} 
\sum_{\substack{\omega \in \Omega(i,j) \\ i=p_{0}, v_{1},p_{1},...,v_{2^{t}},p_{2^{t}}=j}}
X_{f^{(2^{t})}}^{(\omega)}(p_{0},v_{1})...X_{f^{(2^{t})}}^{(\omega)}(p_{2^{t}-1},v_{2^{l}})
X_{f^{(2^{t})}}^{(\omega)}(p_{1},v_{1})...X_{f^{(2^{t})}}^{(\omega)}(p_{2^{t}},v_{2^{l}}),
\end{equation}
where:
\begin{enumerate}
    \item $\Omega(i,j)$ is the set of all the walks between $i$ and $j$
    \item $p_{0},...,p_{2^{t}}$ are some vertices (potentially with repetitions) from $\omega$, visited in that order along the walk $\omega$, as going from $i$ to $j$
    \item $X_{f}^{(\omega)}(a,b)$ is a random variable that is equal to $$f(l(\omega(a,b)))W(a,b)\left(\prod_{v \in \omega(a,b)} \mathrm{deg}(v)\right)(1-p_{\mathrm{halt}})^{-l(\omega(a,b))}$$ (for $\omega(a,b)$ denoting vertices on $\omega$ from $a$, but not including $b$, $W(a,b)$ denoting the corresponding product of edge weights and $l(\omega(a,b))$ being the number of edges of the part of $\omega$ from $a$ to $b$) if a random walk from $a$ reaches $b$, as a prefix of $\omega$ starting from $a$ and going to $b$ and is zero otherwise. 
    \item $f^{(l)}$ is a modulation function for the GRFs++ mechanism of degree $l$.
\end{enumerate}

Similarly, one can write $\widehat{\mathbf{K}}^{(2^{t+1})}_{\boldsymbol{\alpha}}(\mathbf{W})[i,j]$ as:

\begin{align}
\begin{split}
\sum_{\substack{\omega \in \Omega(i,j) \\ i=p_{0}, v_{1},p_{1},...,v_{2^{t}},p_{2^{t}}=j \\ u_{1},u_{2},...,u_{2^{t+1}-1},u_{2^{t+1}}}}
X_{f^{(2^{t+1})}}^{(\omega)}(p_{0},v_{1})...X_{f^{(2^{t+1})}}^{(\omega)}(p_{2^{t}-1},v_{2^{t}})
X_{f^{(2^{t+1})}}^{(\omega)}(p_{1},v_{1})...X_{f^{(2^{t+1})}}^{(\omega)}(p_{2^{t}},v_{2^{t}}) \\
X_{f^{(2^{t+1})}}^{(\omega)}(p_{0},u_{1})X_{f^{(2^{t+1})}}^{(\omega)}(p_{1},u_{3})...
X_{f^{(2^{t+1})}}^{(\omega)}(p_{2^{t}-1},u_{2^{t+1}-1})\\
X_{f^{(2^{t+1})}}^{(\omega)}(v_{1},u_{1})X_{f^{(2^{t+1})}}^{(\omega)}(v_{2},u_{3})...
X_{f^{(2^{t+1})}}^{(\omega)}(v_{2^{t}},u_{2^{t+1}-1}) \\
X_{f^{(2^{t+1})}}^{(\omega)}(v_{1},u_{2})X_{f^{(2^{t+1})}}^{(\omega)}(v_{2},u_{4})...X_{f^{(2^{t+1})}}^{(\omega)}(v_{2^{t}},u_{2^{t+1}})\\
X_{f^{(2^{t+1})}}^{(\omega)}(p_{1},u_{2})X_{f^{(2^{t+1})}}^{(\omega)}(p_{2},u_{4})...X_{f^{(2^{t+1})}}^{(\omega)}(p_{2^{t}},u_{2^{t+1}})
\end{split}
\end{align}
Denote the sub-sum of the above sum, corresponding to the particular choice of:
$p_{1},...,p_{2^{t}},v_{1},...,v_{2^{t}}$ as: $\Psi(p_{1},...,p_{2^{t}},v_{1},...,v_{2^{t}})$.

It suffices to prove that for any two sequences $p_{1},p_{2},...,p_{2^{t}}, v_{1},...,v_{2^{t}}$
and $p^{\prime}_{1},p^{\prime}_{2},...,p^{\prime}_{2^{t}}, v^{\prime}_{1},...,v^{\prime}_{2^{t}}$, the following holds:

\begin{align}
\begin{split}
\mathbb{E}[\left(X_{f^{(2^{t})}}^{(\omega)}(p_{0},v_{1})...X_{f^{(2^{t})}}^{(\omega)}(p_{2^{t}-1},v_{2^{l}})
X_{f^{(2^{t})}}^{(\omega)}(p_{1},v_{1})...X_{f^{(2^{t})}}^{(\omega)}(p_{2^{t}},v_{2^{l}}) 
\right) \\
\left(X_{f^{(2^{t})}}^{(\omega)}(p^{\prime}_{0},v^{\prime}_{1})...X_{f^{(2^{t})}}^{(\omega)}(p^{\prime}_{2^{t}-1},v^{\prime}_{2^{l}})
X_{f^{(2^{t})}}^{(\omega)}(p^{\prime}_{1},v^{\prime}_{1})...X_{f^{(2^{t})}}^{(\omega)}(p^{\prime}_{2^{t}},v^{\prime}_{2^{t}})\right)] 
\geq  \\ \mathbb{E}\left[\Psi(p_{1},...,p_{2^{t}},v_{1},...,v_{2^{t}})\Psi(p^{\prime}_{1},...,p^{\prime}_{2^{t}},v^{\prime}_{1},...,v^{\prime}_{2^{t}})\right]
\end{split}    
\end{align}
This however follows from the convolutional properties of the modulation function $f$ (Lemma \ref{lemma:higher-conv}) and the fact that the product of two $X$-variables corresponding to some walk starting at some fixed vertex of a graph $\mathrm{G}$ is not identically zero if and only if one of the walks is a prefix of another one.
\end{proof}

\section{Additional Experimental Details}
In this section, we provide additional details regarding the experimental setup and present additional results. Moreover we show a plot~(Fig.~\ref{fig:time_grf_plus}) highlighting the speed gains by GRF++ over the baseline kernel. At moderate mesh sizes (2K–4K), GRF++ gives $\sim 1.7\times - 2.2\times$ speedup while for larger meshes (6K–10K), GRF++ consistently delivers $\sim3\times$ faster runtime. Thus GRF++ scales significantly better as mesh complexity increases.

\textbf{\begin{figure}[h!]
    \centering
\includegraphics[width=.9\textwidth]{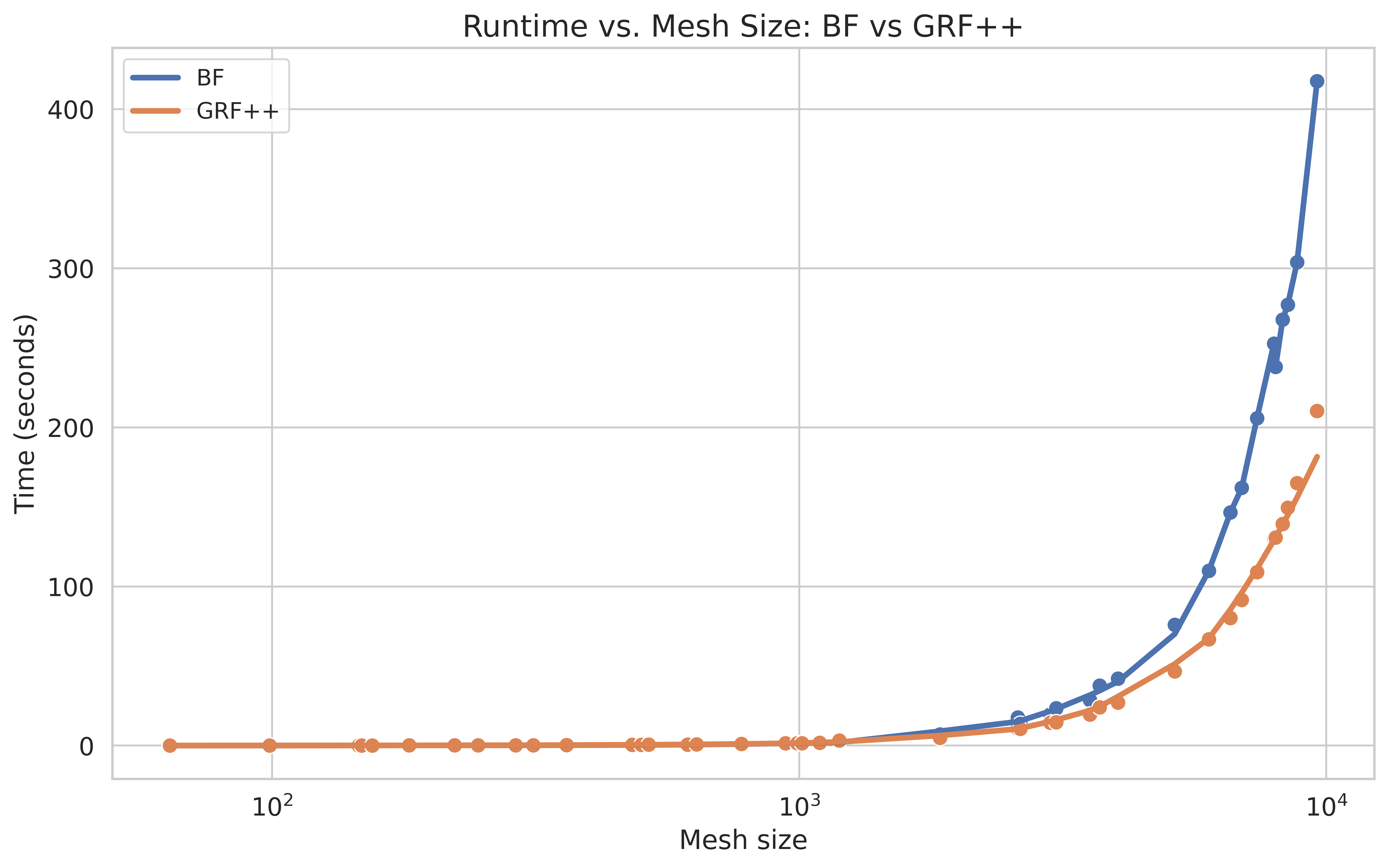}
    \caption{\small{Time comparison between GRF++ and baseline diffusion kernel (BF) over various mesh sizes. GRF++ is significantly faster than the baseline as the graphs get larger.}}
    \label{fig:time_grf_plus}
\end{figure}}

\begin{figure}[h!]
    \centering
\includegraphics[width=.9\textwidth]{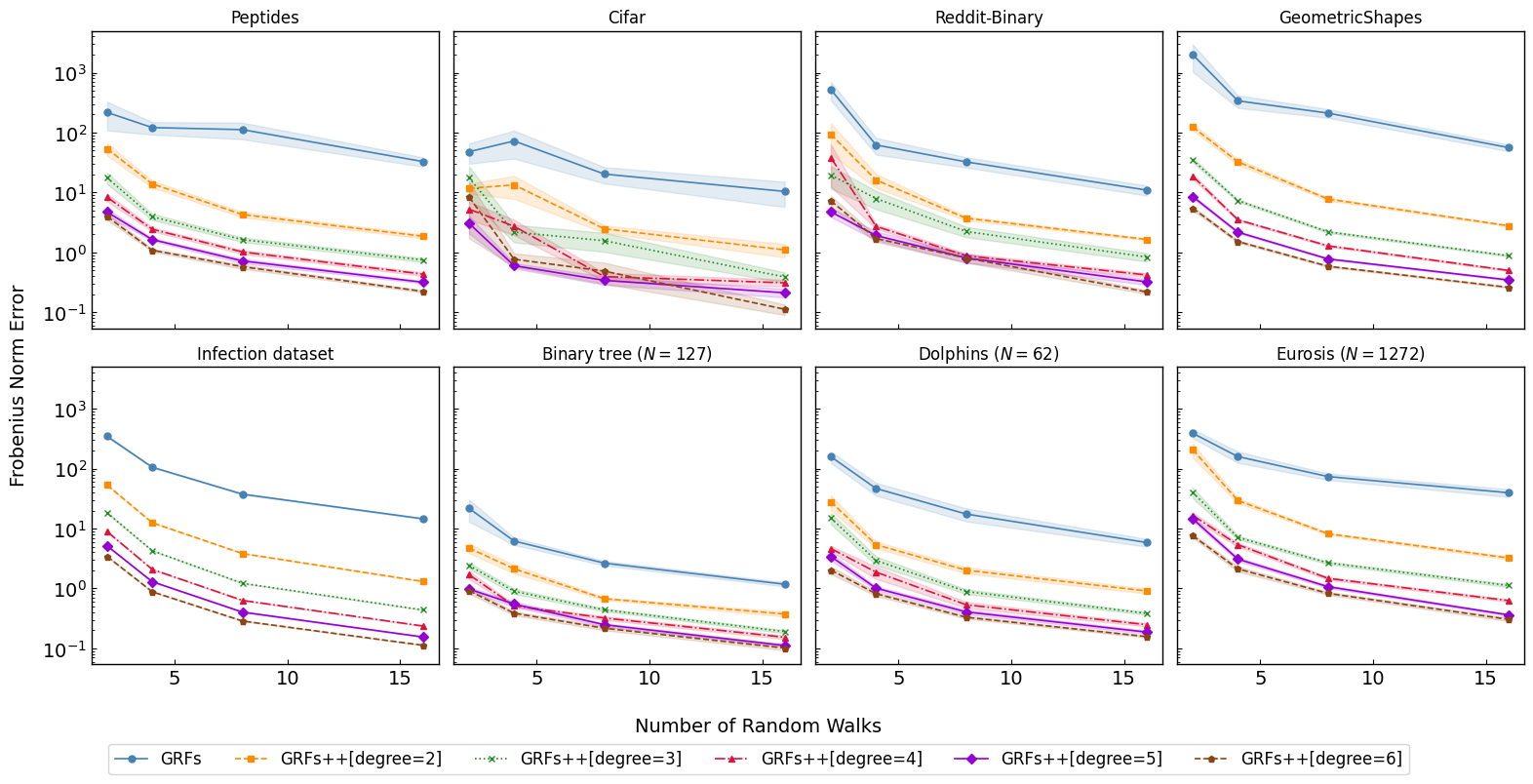}
    \caption{\small{Estimation of the kernel values for distant nodes for the diffusion kernel. GRF++ provides a more accurate estimation in various graphs of large diameters.}}
    \label{fig:long_estimation}
\end{figure}

\begin{figure}[h!]
    \centering
    \includegraphics[width=\textwidth]{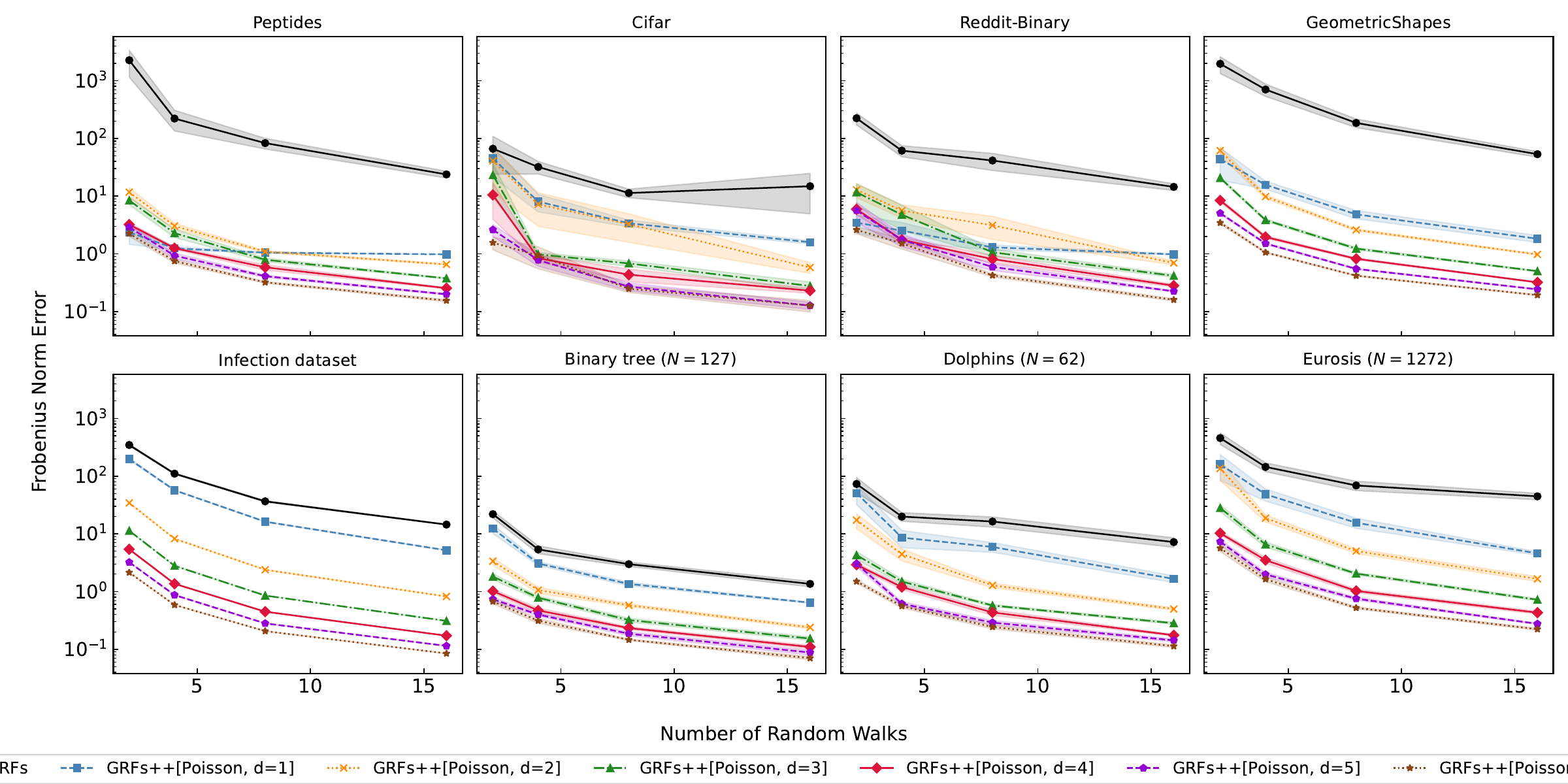}
    \caption{\small{As in Fig. \ref{fig:long_estimation}, but with Poisson termination strategy activated. This novel halting strategy, proposed in this paper, further improves approximation quality.}}
    \label{fig:new_halt_long}
\end{figure}

\subsection{Accurate estimation of Graph Kernels}
We follow the exact setup as~\citep{general_grfs}. For computational comparison we used a randomly generated connected graph with 500 nodes. To have fair comparison we derived the relevant $p_{halt} = p_{base} * degree-of-kernel$. We fixed the number of random walks to 256. 

\subsubsection{Experiments on Graphs with Large Diameters}~\label{sec:ld_graph_app}
In this subsection, we provide details on the graphs used for estimating longer walks. For this task, we pick graphs from Peptides~\citep{dwivedi2022LRGB}, CIFAR-10~\citep{dwivedi2023benchmarking}, Reddit-Binary~\citep{Morris+2020}, Geometric Shapes~\citep{yannickS_geometric_shapes} as well from the dataset considered by~\citet{general_grfs}. 

For each of these datasets, we remove isolated nodes and select the graphs with the largest diameters from the subset of the connected graphs. Finally to threshold the graphs to select the longer walks, we compute the shortest path distance via the Floyd-Warshall algorithm. We then select the pair of nodes where the walks are longer than the specified distance away. We then use this information to mask (i.e. zero out) all entries in the diffusion kernel. 

We provide the walk threshold for these graphs in Table~\ref{ref:long_graph_stats}.

\begin{table}[h]
\centering
\caption{Statistics of datasets used in experiments for estimate the accuracy to capture long walks. The walk length column refers to the fact that we are estimating all walks $\geq k$.}~\label{ref:long_graph_stats}
\begin{tabular}{lccc}
\toprule
Dataset & Diameter & \# Nodes & Walk Length (k) \\
\midrule
Peptides         & 159 & 434  & 3 \\
CIFAR            & 11  & 128  & 4 \\
Reddit-Binary    & 19  & 436  & 5 \\
GeometricShapes  & 28  & 864  & 5 \\
Infection        & 4   & 500  & 4 \\
Binary Tree      & 12  & 127  & 3 \\
Dolphins         & 8   & 62   & 4 \\
Eurosis          & 10  & 1272 & 4 \\
\bottomrule
\end{tabular}

\end{table}

\subsection{Graph Classification Experiments}~\label{sec:graph_class_app}
In this subsection we provide additional details about our graph classification experiments. The statistics of our datasets is provided in Table~\ref{tab:graph_data_stats} with additional details provided in~\citep{Morris+2020}. We follow the
framework proposed by~\citep{errica_fair_2020} to evaluate the performance of the diffusion kernel as well as the approximate kernels obtained by GRF and GRF++. In particular, we use 10-fold cross-validation to obtain an estimate of the generalization performance of the methods.

Finally, we follow the approach by~\citep{delara2018simple} to create graph features by using the smallest $k$-eigenvalues of the corresponding kernels. These features are then passed to a random forest classifier for classification. $k$ is independently for the baseline as well as for GRF and GRF++.

We did a small hyperparameter sweep over $\{.5,.6,.8,.9\}$ to find the width of the diffusion kernel. For GRF and GRF++, we fix the halting probability to be $.1$ and do a hyperparameter sweep over the number of walks. The degree of GRF++ is chosen to be $2$.

\begin{table}[t]
\caption{Statistics of the graph classification datasets used in this paper. }
\label{tab:graph_data_stats}
\centering
\resizebox{0.75\textwidth}{!}{%
\begin{tabular}{@{}lrccccc@{}}
\toprule
 &  &  & Avg. & Avg.  & \# Node  & \# Node  \\ 
 \textsc{Datasets} & \# Graphs & \# Labels & \# Nodes & \# Edges & Labels & Attributes\\
 \midrule
\textsc{Mutag}            & 188   & ~2  & ~~17.93  & ~~~~19.79     & ~~7  & ~~-  \\
\textsc{Ptc-Mr}           & 344   & ~2  & ~~14.29  & ~~~~14.69     & 19 & ~~-  \\
\textsc{Enzymes}          & 600   & ~6  & ~~32.63  & ~~~~62.14     & ~~3  & 18 \\
\textsc{Proteins}         & 1113  & ~2  & ~~39.06  & ~~~~72.82     & ~~3  & ~~1  \\
\textsc{D\&D}             & 1178  & ~2  & 284.32 & ~~715.66    & 82 & ~~-  \\
\textsc{Imdb Binary}      & 1000 & ~2  & ~~19.77  & ~~~~96.53     & ~~-  & ~~-  \\
\textsc{Imdb Multi}       & 1500  & ~3  & 13.0     & ~~~~65.94     & ~~-  & ~~-  \\
\textsc{NCI1}             & 4110  & ~2  & ~~29.87  & ~~~~32.30     & 37 & ~~-  \\
\textsc{Collab}           & 5000  & ~3  & ~~74.49  & 2457.78 & ~~-  & ~~-  \\
\textsc{Reddit Binary}    & 2000  & ~2  & 429.63 & ~~497.75    & ~~-  & ~~-  \\
\textsc{Reddit Multi-5k}  & 4999  & ~5  & 508.52 & ~~594.87    & ~~-  & ~~-  \\
\textsc{Reddit Multi-12k} & 11929 & 11 & ~391.41 & ~~456.89    & ~~-  & ~~-  
\\ 
\bottomrule
\end{tabular}%
}
\end{table}

Finally we posit our results in the context of various kernel methods as well as GNNs. In particular, we compare our methods against 20 popular kernels like (1) Vertex Histogram kernel (VH), (2) Random Walk kernel (RW), (3)
Shortest Path kernel (SP), (4) Graphlet kernel (GR), (5) Weisfeiler-Lehman sub-tree kernel (WL-VH), (6) Weisfeiler-Lehman shortest path kernel (WL-SP), (7) Weisfeiler-Lehman
pyramid match kernel (WL-PM), (8) Weisfeiler-Lehman optimal assignment kernel (WLOA), (9) Neighborhood Hash kernel (NH), (10) Neighborhood subgraph pairwise distance
kernel (NSPDK), (11) Lov\'asz $\vartheta$ kernel (Lo-$\vartheta$), (12) SVM-$\vartheta$ kernel (SVM-$\vartheta$), (13) Ordered
Decompositional DAGs with subtree kernel (ODD-STh), (14) Pyramid Match kernel (PM),
(15) GraphHopper kernel (GH), (16) Subgraph Matching kernel (SM), (17) Propagation kernel (PK), (18) Multiscale Laplacian kernel (ML), (19) Core Weisfeiler-Lehman subtree
kernel (CORE-WL-VH), and (20) Core Shortest Path kernel (CORE-SP). Note that this kernel computes relationship between graphs whereas our GRF++ computes relationship between nodes in a given graph. Finally we also compare our methods to popular GNNs like (1) DGCNN~\citep{dgcnn}, (2) GraphSAGE~\citep{graphsage}, (3) DiffPool~\citep{diffpool}, and (4) GIN~\citep{gin}. All these baseline numbers are taken from~\citep{Nikolentzos_2021}. Our results show that diffusion kernel is competitive across all the datasets. Our GRF++ mechanism maintains similar performance to that of the baseline diffusion kernel while being more computationally efficient. For unlabeled graphs, the diffusion kernel as well as GRF++ offers competitive performance across 20+ methods. The performance of the diffusion kernel as well as GRF++ dips for node labeled graphs as they do not take into account the node labels. However our algorithm can be modified to construct walks among nodes that share the same label using the ideas from~\citep{lifted-grfs}. We leave this extension for future work.

\begin{table}[!t]
\centering
\scriptsize
\def\arraystretch{1.05}
\begin{tabular}{llccccccc} \toprule
& \multirow{3}{*}{Methods} & \multicolumn{6}{c}{DATASETS} & \multirow{3}{*}{Rank}\\ \cline{3-8}
& & \multirow{2}{*}{MUTAG} & \multirow{2}{*}{ENZYMES} & \multirow{2}{*}{NCI1} & \multirow{2}{*}{PTC-MR} & \multirow{2}{*}{D\&D} & \multirow{2}{*}{PROTEINS} \\
& & & & & & &\\ 
\midrule
\parbox[t]{2mm}{\multirow{17}{*}{\rotatebox[origin=c]{90}{Kernels}}} & VH & 69.1 {\tiny ($\pm$ 4.1)} & 20.0 {\tiny ($\pm$ 4.8)} & 55.7 {\tiny ($\pm$ 2.0)} & 57.1 {\tiny ($\pm$ 9.6)} & 74.8 {\tiny ($\pm$ 3.7)} & 71.1 {\tiny ($\pm$ 4.4)} & 21.0\\ 
& RW & 81.4 {\tiny ($\pm$ 8.9)} & 16.7 {\tiny ($\pm$ 1.8)} & \texttt{TIMEOUT} & 54.4 {\tiny ($\pm$ 9.8)} & \texttt{OUT-OF-MEM} & 69.5 {\tiny ($\pm$ 5.1)} & 22.0\\ 
& SP & 82.4 {\tiny ($\pm$ 5.5)} & 37.3 {\tiny ($\pm$ 8.7)} & 72.5 {\tiny ($\pm$ 2.0)} & 60.2 {\tiny ($\pm$ 9.4)} & 77.9 {\tiny ($\pm$ 4.5)} & 74.9 {\tiny ($\pm$ 3.6)} & 13.8 \\
& WL-VH & 86.7 {\tiny ($\pm$ 7.3)} & 50.7 {\tiny ($\pm$ 7.3)} & 85.2 {\tiny ($\pm$ 2.2)} & 64.9 {\tiny ($\pm$ 6.4)} & 78.7 {\tiny ($\pm$ 2.3)} & 76.2 {\tiny ($\pm$ 3.5)} & 4.6\\ 
& WL-SP & 81.4 {\tiny ($\pm$ 8.7)} & 27.3 {\tiny ($\pm$ 7.4)} & 60.8 {\tiny ($\pm$ 2.4)} & 54.5 {\tiny ($\pm$ 9.8)} & 76.0 {\tiny ($\pm$ 3.5)} & 72.1 {\tiny ($\pm$ 3.1)} & 19.2\\ 
& WL-PM & 88.3 {\tiny ($\pm$ 7.1)} & 57.5 {\tiny ($\pm$ 6.8)} & 85.6 {\tiny ($\pm$ 1.7)} & 65.1 {\tiny ($\pm$ 7.5)} & \texttt{OUT-OF-MEM} & 75.9 {\tiny ($\pm$ 3.8)} & 5.8 \\ 
& WL-OA & 87.2 {\tiny ($\pm$ 5.4)} & 58.0 {\tiny ($\pm$ 5.0)} & 86.3 {\tiny ($\pm$ 1.6)} & 65.7 {\tiny ($\pm$ 9.6)} & 77.6 {\tiny ($\pm$ 3.0)} & 76.2 {\tiny ($\pm$ 3.9)} & 3.2 \\ 
& NH & 88.3 {\tiny ($\pm$ 6.3)} & 54.5 {\tiny ($\pm$ 3.6)} & 84.7 {\tiny ($\pm$ 1.9)} & 63.4 {\tiny ($\pm$ 9.2)} & 74.6 {\tiny ($\pm$ 3.5)} & 75.0 {\tiny ($\pm$ 4.2)} & 6.6 \\ 
& NSPDK & 85.6 {\tiny ($\pm$ 8.9)} & 42.2 {\tiny ($\pm$ 8.0)} & 74.3 {\tiny ($\pm$ 2.1)} & 59.1 {\tiny ($\pm$ 7.3)} & 78.9 {\tiny ($\pm$ 4.7)} & 72.5 {\tiny ($\pm$ 2.9)} & 11.4 \\ 
& ODD-STh & 80.4 {\tiny ($\pm$ 8.8)} & 32.3 {\tiny ($\pm$ 4.8)} & 75.2 {\tiny ($\pm$ 2.0)} & 59.4 {\tiny ($\pm$ 9.8)} & 76.4 {\tiny ($\pm$ 4.5)} & 70.9 {\tiny ($\pm$ 4.1)} & 16.4\\ 
& PM & 85.1 {\tiny ($\pm$ 5.8)} & 43.2 {\tiny ($\pm$ 5.3)} & 73.5 {\tiny ($\pm$ 1.9)} & 60.2 {\tiny ($\pm$ 8.2)} & 77.9 {\tiny ($\pm$ 3.7)} & 70.9 {\tiny ($\pm$ 4.4)} & 11.2\\ 
& GH & 82.5 {\tiny ($\pm$ 5.8)} & 37.2 {\tiny ($\pm$ 6.6)} & 71.0 {\tiny ($\pm$ 2.3)} & 60.2 {\tiny ($\pm$ 9.4)} & \texttt{TIMEOUT} & 74.8 {\tiny ($\pm$ 2.4)} & 17.4\\ 
& SM & 85.7 {\tiny ($\pm$ 5.8)} & 35.7 {\tiny ($\pm$ 5.5)} & \texttt{TIMEOUT} & 60.2 {\tiny ($\pm$ 6.8)} & \texttt{OUT-OF-MEM} & \texttt{OUT-OF-MEM} & 16.8\\ 
& PK & 76.6 {\tiny ($\pm$ 5.2)} & 44.0 {\tiny ($\pm$ 6.3)} & 82.1 {\tiny ($\pm$ 2.1)} & 65.1 {\tiny ($\pm$ 5.6)} & 77.7 {\tiny ($\pm$ 4.2)} & 73.1 {\tiny ($\pm$ 4.7)} & 10.2\\ 
& ML & 87.2 {\tiny ($\pm$ 7.5)} & 48.5 {\tiny ($\pm$ 7.8)} & 79.7 {\tiny ($\pm$ 1.8)} & 64.5 {\tiny ($\pm$ 5.8)} & 78.6 {\tiny ($\pm$ 4.0)} & 74.2 {\tiny ($\pm$ 4.4)} & 6.0\\ 
& CORE-WL-VH & 85.6 {\tiny ($\pm$ 6.5)} & 51.7 {\tiny ($\pm$ 7.0)} & 85.2 {\tiny ($\pm$ 2.2)} & 65.5 {\tiny ($\pm$ 5.6)} & 79.5 {\tiny ($\pm$ 3.2)} & 76.5 {\tiny ($\pm$ 4.4)} & 4.0 \\ 
& CORE-SP & 85.1 {\tiny ($\pm$ 6.8)} & 39.5 {\tiny ($\pm$ 9.3)} & 73.8 {\tiny ($\pm$ 1.4)} & 57.3 {\tiny ($\pm$ 9.7)} & 79.3 {\tiny ($\pm$ 3.8)} & 76.5 {\tiny ($\pm$ 3.9)} & 13.0\\
\midrule
& Diffusion & 86.2 {\tiny ($\pm$ 4.2)} & 41.7 {\tiny ($\pm$ 3.6)} & 71.5 {\tiny ($\pm$ 1.3)} & 62.2 {\tiny ($\pm$ 5.4}) & 74.8 {\tiny ($\pm$ 2.2)} &  74.5 {\tiny ($\pm$ 2.4)} & 15.8\\
& GRF & 86.7 {\tiny ($\pm$ 4.4)} & 39.8 {\tiny ($\pm$ 3.2)} & 70.0 {\tiny ($\pm$ 1.6)} & 59.7 {\tiny ($\pm$ 5.8)} & 73.4 {\tiny ($\pm$ 2.4)} & 73.7 {\tiny ($\pm$ 2.3)} & 15.2 \\
& GRF++ (ours) & 87.8 {\tiny ($\pm$ 5.2)} & 40.9 {\tiny ($\pm$ 3.9)} & 72.4 {\tiny ($\pm$ 1.2)} & 61.3 {\tiny ($\pm$ 5.7)} & 74.5 {\tiny ($\pm$ 2.6)} & 74.3 {\tiny ($\pm$ 2.8)} & 12.4\\
\midrule
\parbox[t]{2mm}{\multirow{4}{*}{\rotatebox[origin=c]{90}{GNNs}}} & DGCNN & 84.0 {\tiny ($\pm$ 7.1)} & 46.3 {\tiny ($\pm$ 6.3)} & 76.4 {\tiny ($\pm$ 1.7)} & 59.5 {\tiny ($\pm$ 6.9)} & 76.6 {\tiny ($\pm$ 4.3)} & 73.2 {\tiny ($\pm$ 3.2)} & 11.8 \\ 
& GraphSAGE & 83.6 {\tiny ($\pm$ 9.6)} & 46.1 {\tiny ($\pm$ 5.4)} & 76.0 {\tiny ($\pm$ 1.8)} & 61.7 {\tiny ($\pm$ 4.9)} & 72.9 {\tiny ($\pm$ 2.0)} & 74.3 {\tiny ($\pm$ 3.8)} & 12.8 \\ 
& DiffPool & 79.8 {\tiny ($\pm$ 6.7)} & 50.7 {\tiny ($\pm$ 8.7)} & 76.9 {\tiny ($\pm$ 1.9)} & 61.1 {\tiny ($\pm$ 5.6)} & 75.0 {\tiny ($\pm$ 3.5)} & 72.5 {\tiny ($\pm$ 3.5)} & 12.0 \\ 
& GIN & 84.7 {\tiny ($\pm$ 6.7)} & 44.5 {\tiny ($\pm$ 4.1)} & 80.0 {\tiny ($\pm$ 1.4)} & 59.1 {\tiny ($\pm$ 7.0)} & 75.3 {\tiny ($\pm$ 2.9)} & 72.8 {\tiny ($\pm$ 3.6)} & 12.4\\
\bottomrule
\end{tabular}
\caption{Average classification accuracy ($\pm$ standard deviation) on the $6$ classification datasets containing node-labeled graphs.}
\label{tab:results_labeled}
\end{table}

\begin{table}[t]
\centering
\scriptsize
\def\arraystretch{1.05}
\resizebox{\textwidth}{!} {
\begin{tabular}{llccccccc} \toprule
& \multirow{3}{*}{Methods} & \multicolumn{6}{c}{DATASETS} & \multirow{3}{*}{Rank}\\ \cline{3-8}
& & IMDB & IMDB & REDDIT & REDDIT & REDDIT & \multirow{2}{*}{COLLAB}\\
& & BINARY & MULTI & BINARY & MULTI-5K & MULTI-12K & \\ 
\midrule
\parbox[t]{2mm}{\multirow{20}{*}{\rotatebox[origin=c]{90}{Kernels}}} & VH & 50.0 {\tiny ($\pm$ 0.0)} & 33.3 {\tiny ($\pm$ 0.0)} & 50.0 {\tiny ($\pm$ 0.0)} & 20.0 {\tiny ($\pm$ 0.0)} & 21.7 {\tiny ($\pm$ 1.5)} & 52.0 {\tiny ($\pm$ 0.1)} & 21.50\\ 
& RW & 64.1 {\tiny ($\pm$ 4.5)} & 44.6 {\tiny ($\pm$ 4.1)} & \texttt{TIMEOUT} & \texttt{TIMEOUT} & \texttt{TIMEOUT} & 68.0 {\tiny ($\pm$ 1.7)} & 22.17\\ 
& SP & 58.2 {\tiny ($\pm$ 4.7)} & 39.2 {\tiny ($\pm$ 2.3)} & 81.7 {\tiny ($\pm$ 2.5)} & 47.9 {\tiny ($\pm$ 1.9)} & \texttt{TIMEOUT} & 58.8 {\tiny ($\pm$ 1.2)} & 19.33\\ 
& GR & 66.1 {\tiny ($\pm$ 2.7)} & 39.5 {\tiny ($\pm$ 2.7)} & 76.1 {\tiny ($\pm$ 2.6)} & 34.7 {\tiny ($\pm$ 2.0)} & 23.0 {\tiny ($\pm$ 1.4)} & 73.0 {\tiny ($\pm$ 2.0)} & 15.83\\ 
& WL-VH & 70.7 {\tiny ($\pm$ 6.8)} & 51.3 {\tiny ($\pm$ 4.4)} & 67.8 {\tiny ($\pm$ 3.5)} & 50.5 {\tiny ($\pm$ 1.6)} & 38.7 {\tiny ($\pm$ 1.7)} & 78.3 {\tiny ($\pm$ 2.1)} & 7.33\\ 
& WL-SP & 58.2 {\tiny ($\pm$ 4.7)} & 39.2 {\tiny ($\pm$ 2.3)} & \texttt{TIMEOUT} & \texttt{TIMEOUT} & \texttt{TIMEOUT} & 58.8 {\tiny ($\pm$ 1.2)} & 24.17\\ 
& WL-PM & 73.6 {\tiny ($\pm$ 3.4)} & 49.1 {\tiny ($\pm$ 5.5)} & \texttt{OUT-OF-MEM} & \texttt{OUT-OF-MEM} & \texttt{OUT-OF-MEM} & \texttt{OUT-OF-MEM} & 10.33 \\ 
& WL-OA & 72.6 {\tiny ($\pm$ 5.5)} & 51.1 {\tiny ($\pm$ 4.3)} & 89.0 {\tiny ($\pm$ 1.3)} & 54.0 {\tiny ($\pm$ 1.2)} & \texttt{TIMEOUT} & 80.5 {\tiny ($\pm$ 2.0)} & 7.17 \\ 
& NH & 71.6 {\tiny ($\pm$ 4.5)} & 50.5 {\tiny ($\pm$ 5.0)} & 81.2 {\tiny ($\pm$ 2.0)} & 49.9 {\tiny ($\pm$ 2.4)} & 39.6 {\tiny ($\pm$ 1.4)} & 81.1 {\tiny ($\pm$ 2.4)} & 6.67 \\ 
& NSPDK & 67.4 {\tiny ($\pm$ 3.3)} & 44.6 {\tiny ($\pm$ 3.8)} & \texttt{TIMEOUT} & \texttt{TIMEOUT} & \texttt{TIMEOUT} & \texttt{TIMEOUT} & 23.50\\ 
& Lo-$\vartheta$ & 51.0 {\tiny ($\pm$ 4.2)} & 39.8 {\tiny ($\pm$ 2.6)} & \texttt{TIMEOUT} & \texttt{TIMEOUT} & \texttt{TIMEOUT} & \texttt{TIMEOUT} & 25.33\\ 
& SVM-$\vartheta$ & 52.3 {\tiny ($\pm$ 4.0)} & 39.5 {\tiny ($\pm$ 2.7)} & 74.8 {\tiny ($\pm$ 2.6)} & 31.4 {\tiny ($\pm$ 1.1)} & 22.9 {\tiny ($\pm$ 0.9)} & 52.0 {\tiny ($\pm$ 0.1)} & 19.17\\ 
& ODD-STh & 65.0 {\tiny ($\pm$ 4.0)} & 46.7 {\tiny ($\pm$ 3.4)} & 52.1 {\tiny ($\pm$ 3.2)} & 43.1 {\tiny ($\pm$ 1.8)} & 30.0 {\tiny ($\pm$ 1.6)} & 52.0 {\tiny ($\pm$ 0.1)} & 16.33 \\ 
& PM & 66.3 {\tiny ($\pm$ 4.2)} & 46.1 {\tiny ($\pm$ 3.8)} & 86.5 {\tiny ($\pm$ 2.1)} & 48.3 {\tiny ($\pm$ 2.5)} & 41.1 {\tiny ($\pm$ 0.6)} & 74.0 {\tiny ($\pm$ 2.4)} & 10.33\\ 
& GH & 59.4 {\tiny ($\pm$ 3.4)} & 39.5 {\tiny ($\pm$ 2.6)} & \texttt{TIMEOUT} & \texttt{TIMEOUT} & \texttt{TIMEOUT} & 60.0 {\tiny ($\pm$ 1.4)} & 23.17 \\ 
& SM & \texttt{TIMEOUT} & \texttt{TIMEOUT} & \texttt{OUT-OF-MEM} & \texttt{OUT-OF-MEM} & \texttt{OUT-OF-MEM} & \texttt{TIMEOUT} & 27.00\\ 
& PK & 51.7 {\tiny ($\pm$ 3.7)} & 34.5 {\tiny ($\pm$ 3.0)} & 63.9 {\tiny ($\pm$ 3.0)} & 34.9 {\tiny ($\pm$ 1.7)} & 23.9 {\tiny ($\pm$ 1.2)} & 57.0 {\tiny ($\pm$ 1.2)} & 19.17\\ 
& ML & 69.9 {\tiny ($\pm$ 4.8)} & 47.7 {\tiny ($\pm$ 3.2)} & 89.4 {\tiny ($\pm$ 2.1)} & 35.4 {\tiny ($\pm$ 2.0)} & \texttt{OUT-OF-MEM} & 75.6 {\tiny ($\pm$ 1.6)} & 11.83\\ 
& CORE-WL-VH & 73.5 {\tiny ($\pm$ 6.1)} & 51.7 {\tiny ($\pm$ 4.1)} & 73.0 {\tiny ($\pm$ 4.5)} & 51.1 {\tiny ($\pm$ 1.6)} & 40.2 {\tiny ($\pm$ 1.8)} & 84.5 {\tiny ($\pm$ 2.0)} & 5.00\\ 
& CORE-SP & 68.5 {\tiny ($\pm$ 3.9)} & 51.0 {\tiny ($\pm$ 3.5)} & 91.0 {\tiny ($\pm$ 1.8)} & \texttt{TIMEOUT} & \texttt{OUT-OF-MEM} & \texttt{TIMEOUT} & 16.50\\ 
\midrule
& Diffusion & 72.3 {\tiny ($\pm$ 1.8)} & 50.8 {\tiny ($\pm$ 2.0)}  & 85.6 {\tiny ($\pm$ 1.2)} & 49.5 {\tiny ($\pm$ 0.4)} & 35.9 {\tiny ($\pm$ 0.7)} & 75.1 {\tiny($\pm$ 0.2)} &  7.00\\
& GRF & 68.9 {\tiny ($\pm$ 1.9)}  & 48.7 {\tiny ($\pm$ 2.4)}  & 84.0 {\tiny ($\pm$ 1.1)} & 48.6 {\tiny ($\pm$ 0.7)}  & 35.1 {\tiny ($\pm$ 0.9)} & 70.8 {\tiny ($\pm$ 0.6)}  & 11.00\\
& GRF++ (ours) & 71.2 {\tiny ($\pm$ 2.1)}  & 49.1 {\tiny ($\pm$ 2.9)} & 84.4 {\tiny ($\pm$ 1.5)}  & 49.2 {\tiny ($\pm$ 0.9)} & 36.8 {\tiny ($\pm$ 0.9)}& 73.6 {\tiny ($\pm$ 0.9)} & 8.83\\
\midrule
\parbox[t]{2mm}{\multirow{4}{*}{\rotatebox[origin=c]{90}{GNNs}}} & DGCNN & 69.2 {\tiny ($\pm$ 3.0)} & 45.6 {\tiny ($\pm$ 3.4)} & 87.8 {\tiny ($\pm$ 2.5)} & 49.2 {\tiny ($\pm$ 1.2)} & 43.9 {\tiny ($\pm$ 1.0)} & 71.2 {\tiny ($\pm$ 1.9)} & 9.50\\ 
& GraphSAGE & 68.8 {\tiny ($\pm$ 4.5)} & 47.6 {\tiny ($\pm$ 3.5)} & 84.3 {\tiny ($\pm$ 1.9)} & 50.0 {\tiny ($\pm$ 1.3)} & 43.5 {\tiny ($\pm$ 1.0)} & 73.9 {\tiny ($\pm$ 1.7)} & 8.83 \\ 
& DiffPool & 68.4 {\tiny ($\pm$ 3.3)} & 45.6 {\tiny ($\pm$ 3.4)} & 89.1 {\tiny ($\pm$ 1.6)} & 53.8 {\tiny ($\pm$ 1.4)} & 44.4 {\tiny ($\pm$ 1.4)} & 68.9 {\tiny ($\pm$ 2.0)} & 8.83\\ 
& GIN & 71.2 {\tiny ($\pm$ 3.9)} & 48.5 {\tiny ($\pm$ 3.3)} & 89.9 {\tiny ($\pm$ 1.9)} & 56.1 {\tiny ($\pm$ 1.7)} & 48.3 {\tiny ($\pm$ 1.6)} & 75.6 {\tiny ($\pm$ 2.3)} & 4.50\\ 
\bottomrule
\end{tabular}
}
\caption{Average classification accuracy ($\pm$ standard deviation) on the $6$ classification datasets containing unlabeled graphs.}
\label{tab:results_unlabeled}
\end{table}

% \begin{table}[h]
% \centering
% \caption{Final Ranking of Methods on the Unlabeled Datasets (Lower Average Rank is Better).}
% \begin{tabular}{clr}
% \toprule
% Rank & Method & Average Rank \\
% \midrule
% 1 & GIN & 4.50 \\
% 2 & CORE-WL-VH & 5.00 \\
% 3 & NH & 6.67 \\
% 4 & Diffusion & 7.00 \\
% 5 & WL-OA & 7.17 \\
% 6 & WL-VH & 7.33 \\
% 7 & GraphSAGE & 8.83 \\
% 7 & GRF++ (ours) & 8.83 \\
% 7 & DiffPool & 8.83 \\
% 10 & DGCNN & 9.50 \\
% 11 & PM & 10.33 \\
% 12 & GRF & 11.00 \\
% 13 & ML & 11.83 \\
% 14 & GR & 15.83 \\
% 15 & ODD-STh & 16.33 \\
% 16 & CORE-SP & 16.50 \\
% 17 & SVM-$\vartheta$ & 19.17 \\
% 17 & PK & 19.17 \\
% 19 & SP & 19.33 \\
% 20 & WL-PM & 19.50 \\
% 21 & VH & 21.50 \\
% 22 & RW & 22.17 \\
% 23 & GH & 23.17 \\
% 24 & NSPDK & 23.50 \\
% 25 & WL-SP & 24.17 \\
% 26 & Lo-$\vartheta$ & 25.33 \\
% 27 & SM & 27.00 \\
% \bottomrule
% \end{tabular}
% \label{tab:final_ranks}
% \end{table}

\subsection{Node Clustering \label{apen_expt_nc}} We follow the same setup as ~\citep{general_grfs} except that the number of clusters is based upon the actual number of different classes. Thus we use $p_{halt} = 0.1$, $m=16$. To get details of the dataset, please see ~\cite{ivashkin2016logarithmic}. In table~\ref{table:node_cluster_extend}, we compare GRF++ with 5 other methods : Louvain, Spectral, GRF, KCenters and Propagation. Our method is competitive across all datasets and outperforms all methods on 3 out of 5 datasets.

\subsection{Vertex Normal Prediction Experiments}~\label{sec:vert_norm_app}
In this sub-section, we present implementation details for vertex normal prediction experiments. All the experiments are run on free Google Colab with 12Gb of RAM.

For this task, we choose 40 meshes for 3D-printed objects of varying sizes from the Thingi10K dataset. Following~\citep{choromanski2024fast}, we choose the following meshes corresponding to the ids given by :

\texttt{[60246, 85580, 40179, 964933, 1624039, 91657, 79183, 82407, 40172, 65414, 90431,
74449, 73464, 230349, 40171, 61193, 77938, 375276, 39463, 110793, 368622, 37326,
42435, 1514901, 65282, 116878, 550964, 409624, 101902, 73410, 87602, 255172, 98480,
57140, 285606, 96123, 203289, 87601, 409629, 37384, 57084]
}

We do a small search for the width of the kernel $\sigma \in \{.5, .6, .8\}$ for the baseline runs. For both GRF and GRF++, the number of walks are chosen from the subset $\{4, 8, 16 \}$ and the halting probability of the walk is $.1$. The degree of GRF++ is chosen to be $2$. We also compare against Al Mohy as it is a widely popular algorithm to compute the action of matrix exponential on a vector. We use the SciPy implementation of Al Mohy. 

\begin{table}[!htbp]
\caption{Cosine Similarity for Meshes. GRF++ matches the performance of the baseline kernel (BF). We also compare against Al Mohy's algorithm to compute the action of the matrix exponential on a vector. GRF++r reuses the same walk and still outperform GRF.}
\label{tab:metrics_vertec_normal_all}
\centering
\scriptsize
\begin{tabular}{lrrrrrrrrrr}
\toprule
MESH SIZE & 64 & 99 & 146 & 148 & 155 & 182 & 222 & 246 & 290 & 313 \\
\midrule
BF & 0.4255 & 0.7675 & 0.9424 & 0.4325 & 0.7095 & 0.9654 & 0.8715 & 0.7464 & 0.8895 & 0.5514 \\
Al Mohy & 0.4248 & 0.7672 & 0.9420 & 0.4320 & 0.7091 & 0.9573 & 0.8704 & 0.7363 & 0.8789 & 0.5511 \\
GRF & 0.3083 & 0.6786 & 0.9348 & 0.3813 & 0.6831 & 0.9466 & 0.8569 & 0.6722 & 0.8651 & 0.5309 \\
GRF++ & 0.3889 & 0.7163 & 0.9367 & 0.4434 & 0.6892 & 0.9611 & 0.8684 & 0.7377 & 0.8679 & 0.5432 \\
GRF++r & 0.3905 & 0.7163 & 0.9327 & 0.4435 & 0.6867 & 0.9564 & 0.8510 & 0.6878 & 0.8464 & 0.5178 \\
\midrule
\midrule
MESH SIZE & 362 & 482 & 502 & 518 & 614 & 639 & 777 & 942 & 992 & 1012 \\
\midrule
BF & 0.5884 & 0.9830 & 0.8881 & 0.4956 & 0.9172 & 0.8958 & 0.8022 & 0.8559 & 0.7206 & 0.9366 \\
Al Mohy & 0.5866 & 0.9829 & 0.8872 &
0.4912 & 0.9146 & 0.8871 & 0.8009 & 0.8472 & 0.7184 & 0.9183 \\
GRF & 0.5751 & 0.9737 & 0.8673 & 0.4486 & 0.8866 & 0.8739 & 0.7812 & 0.8369 & 0.6967 & 0.9136 \\
GRF++ & 0.5821 & 0.9807 & 0.8843 & 0.4830 & 0.9084 & 0.8914 & 0.8039 & 0.8496 & 0.7144 & 0.9234 \\
GRF++r & 0.5688 & 0.9775 & 0.8772 & 0.4734 & 0.8982 & 0.8866 & 0.8019 & 0.8406 & 0.7085 & 0.9200 \\
\midrule
\midrule
MESH SIZE & 1094 & 1192 & 1849 & 2599 & 2626 & 2996 & 3072 & 3559 & 3715 & 4025 \\
\midrule
BF & 0.9236 & 0.8297 & 0.9265 & 0.4987 & 0.8927 & 0.9326 & 0.4796 & 0.9356 & 0.9619 & 0.9669 \\
Al Mohy & 0.9207 & 0.8283 & 0.9181 & 
0.4870 & 0.8916 & 0.9236 & 0.4762 & 0.9312 & 0.9582 & 0.9599 \\
GRF & 0.9001 & 0.7987 & 0.9101 & 0.4065 & 0.8778 & 0.9171 & 0.4637 & 0.9208 & 0.9508 & 0.9588 \\
GRF++ & 0.9170 & 0.8167 & 0.9202 & 0.4615 & 0.8820 & 0.9277 & 0.4731 & 0.9293 & 0.9546 & 0.9646 \\
GRF++r & 0.9098 & 0.8134 & 0.9146 & 0.4209 & 0.8730 & 0.9210 & 0.4606 & 0.9281 & 0.9561 & 0.9620 \\
\midrule
\midrule
MESH SIZE & 5155 & 5985 & 6577 & 6911 & 7386 & 7953 & 8011 & 8261 & 8449 & 8800 \\
\midrule
BF & 0.9011 & 0.9194 & 0.9622 & 0.9769 & 0.9437 & 0.9460 & 0.9382 & 0.9196 & 0.9276 & 0.9836 \\
Al Mohy & 0.8907 &  0.9162 & 0.9512 &
0.9684 & 0.9420 & 0.9360 & 0.9226 &
0.9052 & 0.9163 & 0.9798 \\
GRF & 0.8833 & 0.9091 & 0.9525 & 0.9682 & 0.9308 & 0.9383 & 0.9233 & 0.9050 & 0.9139 & 0.9778 \\
GRF++ & 0.8931 & 0.9154 & 0.9599 & 0.9751 & 0.9374 & 0.9429 & 0.9321 & 0.9145 & 0.9205 & 0.9820 \\
GRF++r & 0.8896 & 0.9129 & 0.9561 & 0.9701 & 0.9348 & 0.9410 & 0.9269 & 0.9095 & 0.9160 & 0.9805 \\
\bottomrule
\end{tabular}

\end{table}

\subsection{Vision Transformer Attention Masking}~\label{sec:vit}
In this experiment, we investigate the utility of GRFs++ for introducing inductive biases into the self-attention mechanism of Vision Transformers (ViTs). We treat the input image patches as nodes in a regular 2D grid graph (lattice), where edges connect spatially adjacent patches. We employ GRFs++ (with degree $l=2$) to efficiently approximate the diffusion kernel matrix $K_{\text{diff}}$ on this grid graph. This approximate kernel is subsequently normalized and utilized as a soft mask $M$, which is added to the standard attention logic:
\begin{equation}
    \text{Attention}(\mathbf{Q}, \mathbf{K}, \mathbf{V}) = \text{softmax}\left(\frac{\mathbf{QK}^\top}{\sqrt{d}} + \lambda \mathbf{M}\right)\mathbf{V}
\end{equation}

We choose $\lambda = 0.5$ in our setup. We test our model two large scale datasets : ImageNet~\citep{imagenet} and Places365~\citep{zhou2017places}. 

\begin{table} % {r} = right align, {0.5\textwidth} = width
  \centering 
  \caption{\small{Node Clustering: Comparing GRFs++ against various clustering algorithms. Our method outperforms all other baselines on 3 out of 5 datasets. }}
  \scriptsize
  \begin{tabular}{lcccccccr}
  \toprule
  Name &	\# Nodes &	\# clusters & Propagation & KCenters & Spectral & Louvain &	GRF &	GRF++[d=2] \\
  \midrule
Karate &	34 &	2 & 0.4011 & 0.2585 & 0.2585 & 0.4367 &	0.2995 &	\textbf{0.2585} \\
Dolphins &	62 &	2 & 0.0936 & 0.0323 & 0.0408 &	0.3432 & 0.0635 &	\textbf{0.0323} \\
Polbooks &	105 &	3 & \textbf{0.0621} & 0.0621 & 0.0692 & 0.1154 &	0.1060 &	0.1033 \\
Football &	115 &	12 & 0.1986 &  0.0477 & 0.0551 & \textbf{0.0174} &	0.0731 &	0.0362 \\
Databases &	1006 &	6 & 0.4762 & 0.4918 & 0.3214 &	0.5298 & 0.3528 &	\textbf{0.3001} \\
Eurosis &	1272 &	13 & 0.7418 &  0.1891 & 0.1295 & 0.1599 &	0.2248 &	\textbf{0.1304} \\
\bottomrule
  \end{tabular}
\label{table:node_cluster_extend}
\end{table}

% \begin{table}[h]
% \centering
% \caption{Methods sorted by Average Rank across all datasets (lower is better).}
% \begin{tabular}{lr}
% \toprule
% Method & Average Rank \\
% \midrule
% WL-OA & 3.17 \\
% CORE-WL-VH & 3.50 \\
% WL-VH & 4.33 \\
% WL-PM & 5.67 \\
% NH & 6.50 \\
% ML & 7.00 \\
% PK & 11.00 \\
% CORE-SP & 11.17 \\
% Diffusion & 12.00 \\
% GRF++ (ours) & 12.17 \\
% DGCNN & 12.33 \\
% NSPDK & 12.50 \\
% GraphSAGE & 12.50 \\
% SP & 12.83 \\
% PM & 13.00 \\
% DiffPool & 13.00 \\
% GIN & 13.17 \\
% GRF & 15.00 \\
% GH & 16.00 \\
% ODD-STh & 17.33 \\
% SM & 18.17 \\
% WL-SP & 19.33 \\
% VH & 21.00 \\
% RW & 22.33 \\
% \bottomrule
% \end{tabular}

% \label{tab:avg_ranks}
% \end{table}

\subsection{GRF++ beyond Diffusion Kernels}~\label{sec:other_kernels}
In this section, we showcase the ability of GRFs++ to efficiently approximate graph node kernels beyond the diffusion kernel. We consider the following kernels: (1) $\mathbf{K}(\mathbf{W}) = (\mathbf{I}-\mathbf{W})^{-4}$, (2) $\mathbf{K}(\mathbf{W}) = (\mathbf{I}-\mathbf{W}^{2})^{-4}$, (3) $\mathbf{K}(\mathbf{W}) = \exp(\mathbf{W}^{2})$, with the corresponding modulation functions being given by Taylor-series expansions of the following matrix functions: (1) $\mathbf{W} \rightarrow (\mathbf{I}-\mathbf{W})^{-1}$, (2) $\mathbf{W} \rightarrow (\mathbf{I}-\mathbf{W}^{2})^{-1}$, $\mathbf{W} \rightarrow \exp(\frac{\mathbf{W^{2}}}{4})$.The results are presented in 
Fig. \ref{fig:estimation-2},\ref{fig:estimation-3},\ref{fig:estimation-4}. GRFs++ outperform regular GRFs for all three kernels.

\begin{figure}[h!]
    \centering
\includegraphics[width=\textwidth]{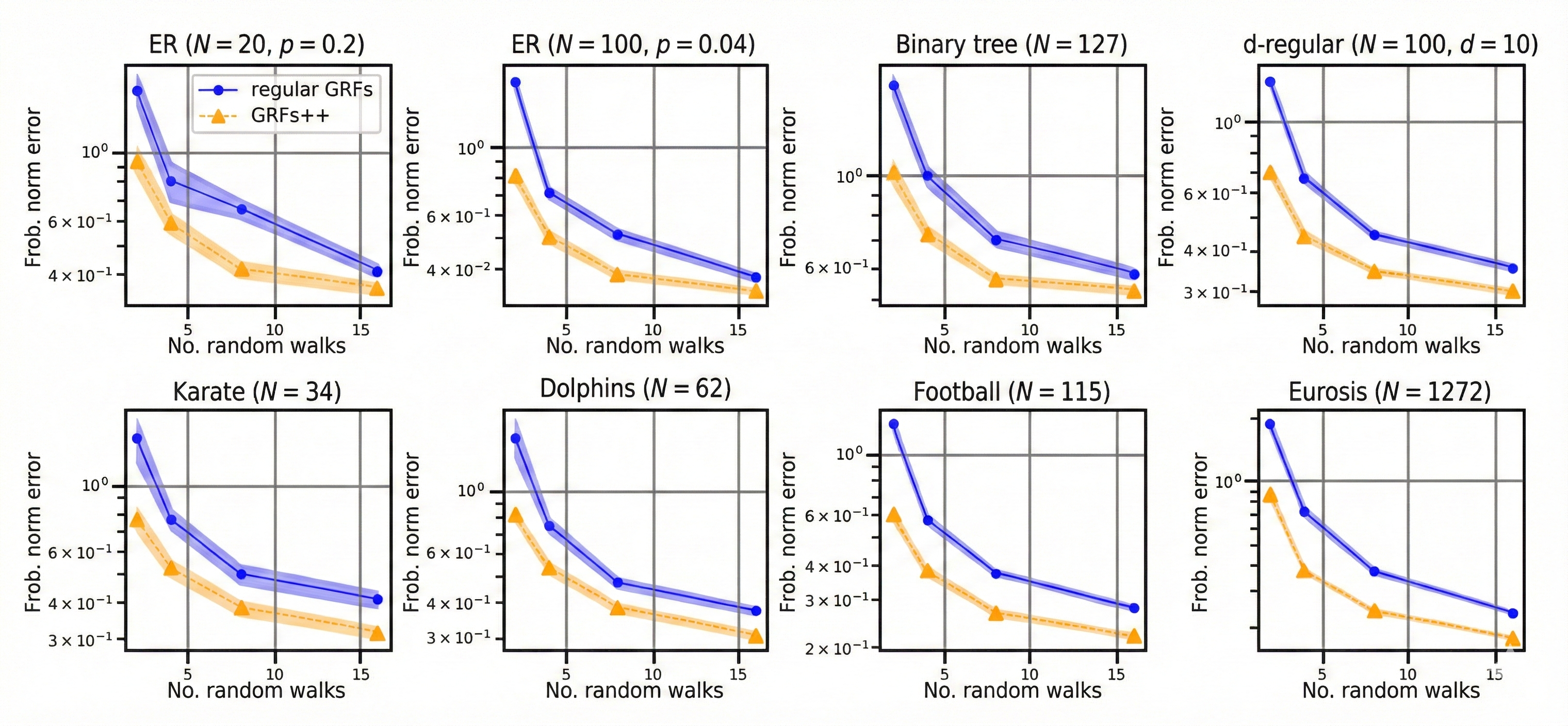}
    \caption{\small{Experiment analogous to this from Fig. \ref{fig:estimation}, but with a fixed degree $l=2$ used in GRfs++ and graph kernel of the form $\mathbf{K}(\mathbf{W}) = (\mathbf{I}-\mathbf{W})^{-4}$.}}
    \label{fig:estimation-2} % \vspace{-3mm}
\end{figure}
\begin{figure}[h!]
    \centering
\includegraphics[width=\textwidth]{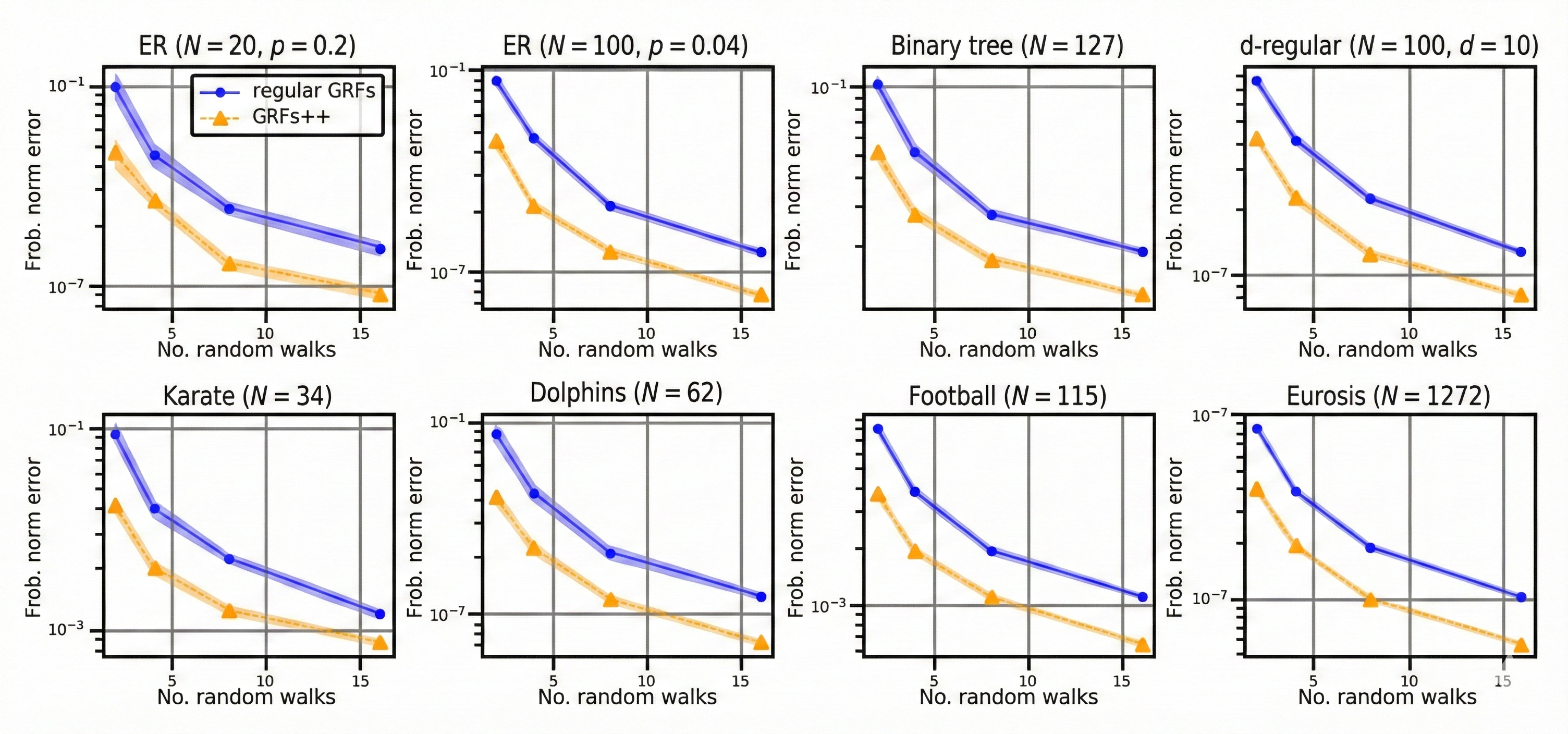}
    \caption{\small{Experiment analogous to this from Fig. \ref{fig:estimation}, but with a fixed degree $l=2$ used in GRfs++ and graph kernel of the form $\mathbf{K}(\mathbf{W}) = (\mathbf{I}-\mathbf{W}^{2})^{-4}$.}}
    \label{fig:estimation-3} % \vspace{-3mm}
\end{figure}
\begin{figure}[h!]
    \centering
\includegraphics[width=\textwidth]{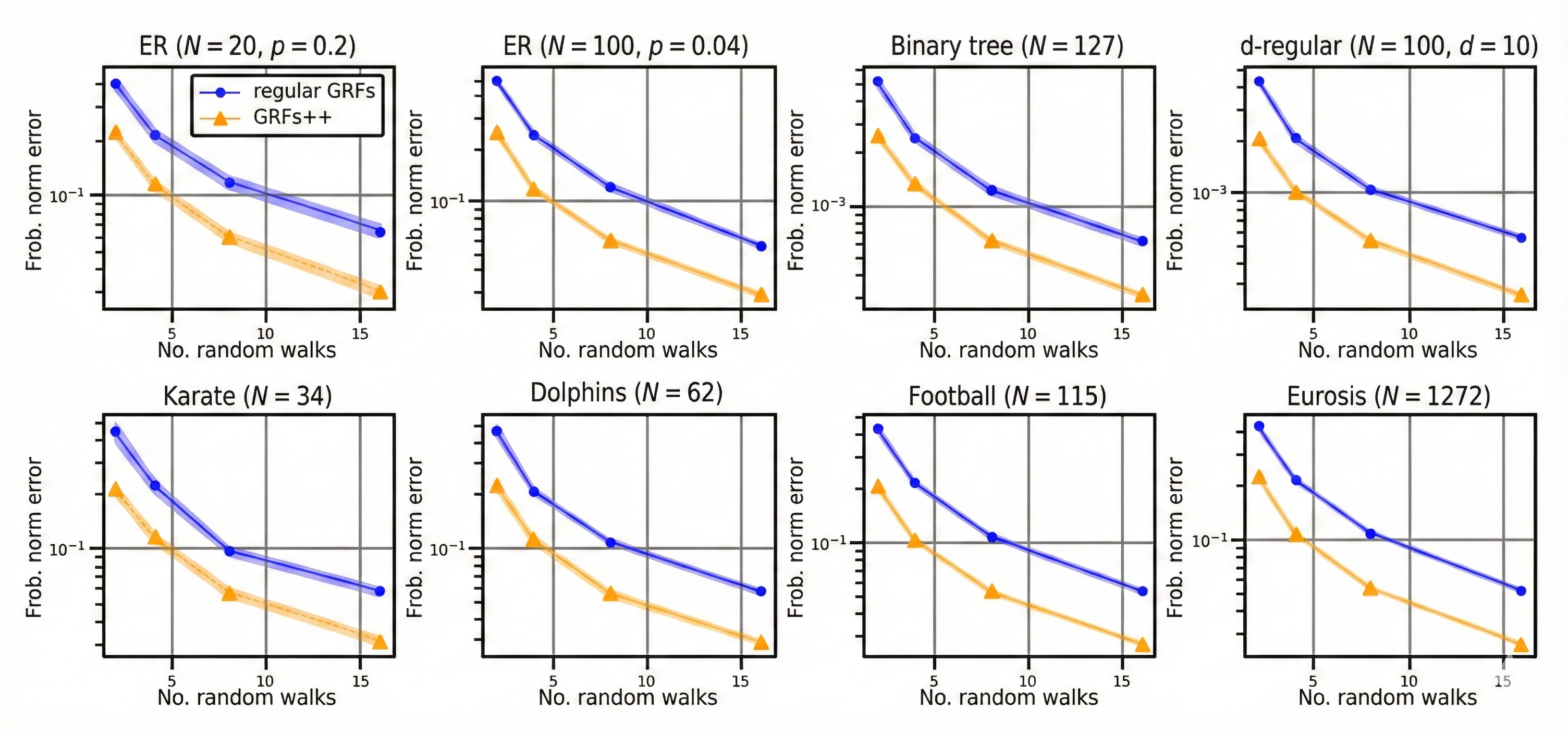}
    \caption{\small{Experiment analogous to this from Fig. \ref{fig:estimation}, but with a fixed degree $l=2$ used in GRfs++ and graph kernel of the form $\mathbf{K}(\mathbf{W}) = \exp(\mathbf{W}^{2})$.}}
    \label{fig:estimation-4} % \vspace{-3mm}
\end{figure}

Moreover we use $K(W) = exp(W^2)$ for graph classification to show that utility of GRF++ extends beyond diffusion kernels. For all the $8$ datasets, GRF++ outperforms GRF and matches the baseline. 

\begin{table}[t]
\centering
\small
\caption{\textbf{Graph classification accuracy (\%) across benchmark datasets.}
We compare the baseline kernel with the proposed GRF and GRF++ kernels. Best results for each dataset are shown in bold.}
\label{tab:kernel_results}

\renewcommand{\arraystretch}{1.15}
\setlength{\tabcolsep}{8pt}

\begin{tabular}{@{}lccc@{}}
\toprule
\textbf{Dataset} & \textbf{Baseline} & \textbf{GRF++} & \textbf{GRF} \\
\midrule
MUTAG        & \textbf{86.20} & 85.12 & 83.55 \\
PROTEINS     & 74.48 & \textbf{74.87} & 73.16 \\
ENZYMES      & \textbf{41.67} & 40.67 & 38.33 \\
IMDB-BINARY  & \textbf{72.30} & 70.68 & 69.12 \\
IMDB-MULTI   & \textbf{50.82} & 49.80 & 48.33 \\
NCI-1        & 71.45 & \textbf{75.47} & 71.64 \\
PTC-MR       & \textbf{62.21} & 61.87 & 59.37 \\
DD           & \textbf{74.79} & 73.67 & 72.71 \\
\bottomrule
\end{tabular}
\end{table}

\section{GRF++ in Graph GPS}~\label{sec:gps_app}
The GraphGPS architecture combines local message passing with global transformer attention in order to capture both short-range and long-range dependencies in graphs. Since transformer attention itself is permutation invariant, GraphGPS augments node representations with positional and structural encodings that provide information about the location of nodes within the graph topology. Two commonly used encodings are LapPE and RWSE. LapPE uses eigenvectors of the graph Laplacian to encode global structural information, providing a spectral notion of node position, while RWSE encodes structural context through random walk return probabilities across multiple diffusion steps. These encodings are injected into the node features before attention and play a critical role in enabling the transformer layers to distinguish structurally different nodes. 

In our setting, the features generated by GRF++ can naturally serve the same purpose. Rather than relying on spectral decompositions or predefined random walk statistics, GRF++ produces randomized graph features that capture multi-scale structural relationships in an efficient and flexible manner. 

For the rest of the section let $d=2$, this will give rise to 2 sets of sparse feature vectors. We use a randomized projection to create a $8$-dim vector from each of these representation. To preserve invariance, we add these representations to create a $8$-dimensional representation for each graph. The rest of the training details follow~\citep{rampavsek2022recipe}.

\section{Reproducibility statement}
\label{sec:reproducibility}

The paper provides a clear description of the GRFs++ algorithm. In Sec. \ref{sec:regular_grfs}, we present detailed description of the regular GRFs algorithm, namely Algorithm 1 box, that GRFs++ build on. This algorithm is also implemented in the github repository mentioned on the first page of \citep{general_grfs}. In Lemma \ref{lemma:higher-conv}, we explain how Algorithm 1 is used in GRFs++ for the general walk-stitching mechanism. Then in Sec. \ref{sec:beyond_bernoulli}, we provide detailed explanation of the modification of Algorithm 1 that needs to be conducted in order to support arbitrary termination strategies (points 1-3). For all the experiments, we provided the names of all datasets and graphs used (or exact procedures to construct those graphs, e.g. random Erdős-Rényi graph models with explicitly given probabilities $p$ of edge sampling). All the theoretical statements have all the assumptions clearly stated and the corresponding proofs given (see: Section \ref{sec:walk-stitching}, Section \ref{sec:theory} and Appendix: Section \ref{sec:main_proof}, Section \ref{sec:appendix_mse} and Section \ref{appendix-conc}). Finally, we open source our implementation at \url{https://github.com/arijitthegame/super_graph_random_features}

% \subsection{Code}
% \label{sec:code}
% 

%%%%%%%%%%%%%%%%%%%%%%%%%%%%%%%%%%%%%%%%%%%%%%%%%%%%%%%%%%%%%%%%%%%%%%%%%%%%%%%
%%%%%%%%%%%%%%%%%%%%%%%%%%%%%%%%%%%%%%%%%%%%%%%%%%%%%%%%%%%%%%%%%%%%%%%%%%%%%%%

\end{document}